\documentclass{article}

\usepackage{comment}
\usepackage{graphicx}

\usepackage{microtype}
\usepackage{graphicx}
\usepackage{subcaption}
\usepackage{booktabs} %
\usepackage{adjustbox}

\usepackage{hyperref}

\usepackage[accepted]{icml2026}

\usepackage{amsmath}
\usepackage{amssymb}
\usepackage{mathtools}
\usepackage{amsthm}

\usepackage[capitalize,noabbrev]{cleveref}

\theoremstyle{plain}
\newtheorem{theorem}{Theorem}[section]
\newtheorem{proposition}[theorem]{Proposition}

\theoremstyle{definition}

\theoremstyle{remark}

\usepackage[textsize=tiny]{todonotes}
\usepackage{algorithm}

\makeatletter

\newwrite\apptoc@out
\newif\ifapptoc@open
\newif\ifinappendix
\apptoc@openfalse
\inappendixfalse

\newcommand{\apptoc@openfile}{%
  \ifapptoc@open\else
    \immediate\openout\apptoc@out=\jobname.apptoc\relax
    \global\apptoc@opentrue
  \fi
}

\newcommand{\apptoc@write}[2]{%
  \ifinappendix
    \apptoc@openfile
    \begingroup
      \let\protect\string
      \immediate\write\apptoc@out{%
  \string\contentsline{#1}{#2}{\thepage}{\@currentHref}%
}
    \endgroup
  \fi
}

\newcommand{\printappendixtoc}{%
  \begingroup
    \setlength{\parskip}{0pt}%
    \setlength{\parindent}{1em}%
    \section*{Appendix Contents}%
    \InputIfFileExists{\jobname.apptoc}{}{%
      \noindent{\small (No appendix entries yet — compile once more.)}%
    }%
  \endgroup
}

\let\old@appendix\appendix
\renewcommand{\appendix}{%
  \old@appendix
  \global\inappendixtrue
}

\let\old@section\section
\renewcommand{\section}{\@ifstar{\app@sec@star}{\app@sec@nostar}}
\newcommand{\app@sec@nostar}[1]{%
  \old@section{#1}%
  \apptoc@write{section}{\string\numberline{\thesection}#1}%
}
\newcommand{\app@sec@star}[1]{%
  \old@section*{#1}%
}

\let\old@subsection\subsection
\renewcommand{\subsection}{\@ifstar{\app@subsec@star}{\app@subsec@nostar}}
\newcommand{\app@subsec@nostar}[1]{%
  \old@subsection{#1}%
  \apptoc@write{subsection}{\string\numberline{\thesubsection}#1}%
}
\newcommand{\app@subsec@star}[1]{%
  \old@subsection*{#1}%
}

\let\old@subsubsection\subsubsection
\renewcommand{\subsubsection}{\@ifstar{\app@subsub@star}{\app@subsub@nostar}}
\newcommand{\app@subsub@nostar}[1]{%
  \old@subsubsection{#1}%
  \apptoc@write{subsubsection}{\string\numberline{\thesubsubsection}#1}%
}
\newcommand{\app@subsub@star}[1]{%
  \old@subsubsection*{#1}%
}

\AtEndDocument{%
  \ifapptoc@open\immediate\closeout\apptoc@out\fi
}

\makeatother
\usepackage{xcolor}
\usepackage{fontawesome5}
\usepackage{soul}
\newcommand{\paperlinks}[2]{%
\vspace{-0.35em}
\begin{center}
\faGithub\ {\href{#1}{\textsc{code}}}
\hspace{1.2em}
\faGlobe\ {\href{#2}{\textsc{website}}}
\end{center}
\vspace{-0.6em}
}

\icmltitlerunning{Compression as Adaptation: Implicit Visual Representation with Diffusion Foundation Models}

\usepackage{extarrows}
\begin{document}

\twocolumn[
  \icmltitle{
Compression as Adaptation:\\
Implicit Visual Representation with Diffusion Foundation Models
  }

  \icmlsetsymbol{equal}{*}

  \begin{icmlauthorlist}
    \icmlauthor{Zongyu Guo}{equal,aff2}
    \icmlauthor{Jiajun He}{equal,aff1}
    \icmlauthor{Zhaoyang Jia}{aff2}
    \icmlauthor{Xiaoyi Zhang}{aff2}
    \icmlauthor{Jiahao Li}{aff2}
    \icmlauthor{Xiao Li}{aff2}
    \icmlauthor{Bin Li}{aff2}
    \icmlauthor{Jos\'e Miguel Hern\'andez-Lobato}{aff1}
    \icmlauthor{Yan Lu}{aff2}
  \end{icmlauthorlist}

  \icmlaffiliation{aff1}{University of Cambridge}
  \icmlaffiliation{aff2}{Microsoft Research Asia}

  \icmlcorrespondingauthor{Zongyu Guo}{zongyuguo@microsoft.com}
  \icmlcorrespondingauthor{Jiajun He}{jh2383@cam.ac.uk}

  \icmlkeywords{Implicit Functional Representations, Visual Compression}

  \vskip 0.3in
  \vspace{-18pt}
\paperlinks{https://github.com/microsoft/VisionAsAdaptations}{https://compressionasadaptation.github.io/}
\vspace{8pt}
]

\printAffiliationsAndNotice{\icmlEqualContribution}

\begin{abstract}

Modern visual generative models acquire rich visual knowledge through large-scale training, yet existing visual representations (such as pixels, latents, or tokens) remain external to the model and cannot directly exploit this knowledge for compact storage or reuse. 
In this work, we introduce a new visual representation framework that encodes a signal as a function,  which is parametrized by low-rank adaptations attached to a frozen visual generative model.
Such implicit representations of visual signals, \textit{e.g.}, an 81-frame video, can further be hashed into a single compact vector, achieving strong perceptual video compression at extremely low bitrates.
Beyond basic compression, the functional nature of this representation enables inference-time scaling and control, allowing additional refinement on the compression performance. 
More broadly, as the implicit representations directly act as a function of the generation process, this suggests a unified framework bridging visual compression and generation.

\end{abstract}

\section{Introduction}

Large-scale visual generative models \citep{esser2024scaling,brooks2024video} have demonstrated remarkable capabilities in synthesizing and manipulating visual content by learning rich visual knowledge from massive data. With the progression from image to video synthesis, these models are further expected to leverage such visual knowledge for higher-level visual reasoning \cite{wiedemer2025video}, moving toward unified and generalist vision foundation models.

Despite this rich visual knowledge embedded in visual generative models, visual signals themselves remain external to the model. Visual content is typically represented as pixels \citep{DBLP:conf/iclr/MengHSSWZE22}, latent variables \citep{rombach2022high} or tokens \citep{esser2021taming}, which must be separately encoded and then fed into the model to enable downstream interactions such as editing. 
This separation between internal model knowledge and external signal representations may lead to redundancy and inefficiency in representation, and limits the ability of the generative model to store or reuse past visual information over time.

To address this gap, we explore encoding visual signals as a function that describes the generation process, enabling synergy with large-scale generative models.
This method---representing signals as a \emph{function} rather than an explicit array---is commonly referred to as \emph{implicit} (or \emph{functional}) representations.
An example is the implicit neural representations (INR).
They model visual signals as continuous coordinate-based functions, typically parameterized by small multi-layer perceptrons \citep{mildenhall2020nerf,sitzmann2020implicit}. While effective in compressing individual signals \citep{dupont2021coin}, they are trained largely from scratch and remain decoupled from the rich  knowledge learned by large-scale models across diverse data.

In this work, we propose to compress visual signals as model adaptations to large-scale diffusion generative models via parameter-efficient fine-tuning (PEFT) techniques \citep{peft}, such as low-rank adaptation (LoRA) \citep{DBLP:conf/iclr/HuSWALWWC22}. While LoRA has been widely used for tasks like personalized generation \citep{ruiz2023dreambooth,kumari2023multi}, we show that it also encodes implicit representations when optimized for specific visual content. Leveraging the pretrained generative model as a visual knowledge prior, such implicit representations emphasize high-level visual semantics, making it particularly effective for compression. Moreover, we show that the induced adaptation parameters can be mapped into a single compact vector \citep{li2025unilora}, enabling compression of visual signals (such as an 81-frame video) into one vector.

\begin{figure*}[t!]
 \centering
  \begin{subfigure}{0.24\linewidth}
 \centering
\includegraphics[scale=0.71, clip, trim=8.5cm 5.3cm 20cm 6.3cm]{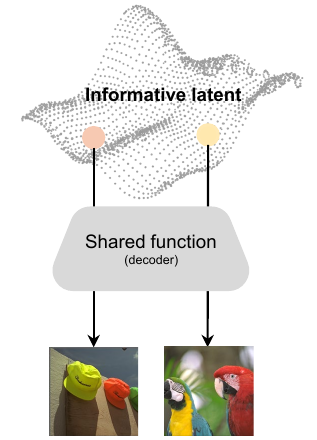}
\caption{\label{figure1a}}
 \end{subfigure}
   \begin{subfigure}{0.24\linewidth}
 \centering
\includegraphics[scale=0.71, clip, trim=14cm 5.3cm 15cm 6.3cm]{./figures/figure1.pdf}
\caption{\label{figure1b}}
 \end{subfigure}
   \begin{subfigure}{0.49\linewidth}
 \centering
\includegraphics[scale=0.71, clip, trim=19.8cm 5.3cm 2.9cm 6.3cm]{./figures/figure1.pdf}
\caption{}
 \end{subfigure}
\caption{(a) Explicit representations by encoding signals into symbolic latent variables. (b) Implicit representations that encode signal information implicitly in functions. (c) One-vector adaptation of large-scale generative models can serve as implicit visual representations.}
\label{figure1}
\end{figure*}

We demonstrate that such implicit visual representations achieve strong perceptual video compression performance at extremely low bitrates on benchmarks, including the HEVC \citep{flynn16common} and UVG \citep{mercat2020uvg} datasets, achieving high visual quality even under extreme bitrate constraints. Representing visual content as a function embedded in a generative model provides additional flexibility that is unavailable to conventional codecs. In particular, the nature of functional representations enables inference-time control, allowing quality improvement through inference-time scaling. 
More broadly, the functional representations can be used beyond the reconstruction task and can serve as ``visual memories" on specific identities, acting during the generation process, which suggests a step towards a unified framework to bridge  compression and adaptive generation within pretrained generative foundation models.

In short, our core contributions include:
\begin{itemize}
    \item We describe a framework that represents a visual signal as a function that generates the signal.
Such implicit representations naturally leverage the rich knowledge embedded in large-scale visual generative models.
\item We propose an approach that compresses the function into a single adaptation vector, achieving strong perceptual compression results on videos.
\item We identify a key advantage of the functional representations: they can be flexibly controlled at inference time. Based on this insight, we introduce an inference-time scaling strategy for compression, which significantly improves reconstruction fidelity.

\end{itemize}

\section{Background}

\subsection{Explicit and Implicit Representations}

The distinction between \emph{explicit} and \emph{implicit} representations varies across research contexts. In some works such as implicit chain-of-thoughts \citep{deng2024explicit, zhu2025survey}, latent activations from neural networks are regarded as implicit representations, emphasizing their lack of direct interpretability. 
In contrast, much of the visual compression literature adopts a different perspective, where the distinction is defined by \emph{how} signal information is represented: whether it is explicitly exposed as symbolic variables or implicitly encoded within a parametrized function \citep{dupont2021coin}. 
In this paper, we follow the latter convention, where pixels, tokens, and latent variables are all treated as \emph{explicit} representations.

\vspace{-0.1cm}
\paragraph{Explicit representations for visual compression}

Traditional lossy visual compression methods rely on explicit latent representations.
A visual signal is encoded into a compact latent code, and reconstructed through a shared decoder. 
In this formulation, all sample-specific information is stored in the latent code, while the decoder remains fixed across signals.
This paradigm underlies neural codecs \citep{DBLP:conf/iclr/BalleLS17,DBLP:conf/cvpr/ChengSTK20,DBLP:conf/nips/LiLL21,DBLP:journals/tcsv/GuoZFC22} based on variational autoencoders (VAEs) \citep{DBLP:journals/corr/KingmaW13}, where compression performance is governed by the expressiveness and entropy of the latent space.

\vspace{-0.1cm}
\paragraph{Implicit representations as functions}
In this work, we use the term implicit representation to mean a functional encoding of a sample, i.e., representing a sample as a function.
The function inputs could be coordinates \citep{stanley2007compositional} or noise, which are shared or signal-agnostic, without any semantic information about the signal itself. Figure~\ref{figure1a} and \ref{figure1b} illustrate such differences.

One specific example of the implicit representation is implicit neural representation (INR), which 
parametrizes the function with a small neural network. Such INRs enable compact and continuous representations of images \citep{dupont2021coin}, videos \citep{chen2021nerv}, and 3D scenes \citep{mildenhall2020nerf}, which have motivated recent exploration of INRs for visual compression. However, existing INR-based compression methods \citep{DBLP:conf/nips/GuoF00H23,DBLP:conf/iclr/0003FGH24,kwan2024nvrc} typically rely on small standalone networks, limiting their ability to recognize or exploit the rich visual knowledge, even in some recent works where INRs are trained with diffusion process \citep{gao2025givic}.

\subsection{Diffusions and Flows}
Our method builds on diffusion- and flow-based large-scale visual generative models \citep{wan2025wan,wu2025qwen}.
A diffusion model \citep{ho2020denoising,song2020score} specifies a forward stochastic differential equation (SDE), gradually corrupting data to noise.
We learn a \emph{score} function by denoising score matching \citep[DSM, ][]{vincent2011connection} to reverse the noising process, mapping noise back to data.

Many recent vision models \citep{wan2025wan,wu2025qwen} are instead trained under
a flow matching formulation \citep{lipman2022flow}.
The model learns a time-dependent vector field $v(x_t,t)$ and generates
samples by solving an ordinary differential equation (ODE) defined by the vector field.
With the common flow-matching formulation, the flow ODE admits an equivalent diffusion variant that preserves the same marginal densities, and the vector field can be directly related to the score.
We detail the connection and the conversion between these formulations in \Cref{app:fm_diff}.

For the diffusion and flow-based vision foundation model, the generative procedure starts from random noise and then evolves the sample following the denoising SDE or the ODE. 
The samples they generated are determined by the entire dynamics, and hence, the generation process can be viewed as a representation of the generated sample.

\section{Methods}

This section presents our methods in details.
We will begin with a high-level motivation and introduction to the framework of implicit visual representations.

\subsection{Learning Implicit Visual Representations}\label{subsec:overview}

\paragraph{Motivations}
Our method starts from a simple observation: modern diffusion foundation models already contain a large amount of reusable visual knowledge in their weights, and they express this knowledge through a generation process. 
Strong priors about what ``natural" images or videos look like are encoded in the pretrained weights and expressed through the entire generative procedure. Therefore, instead of compressing \emph{what the visual signal is}, we compress \emph{how to generate the visual signal}. 
In other words, rather than representing a visual sample explicitly (as pixels, latents, or tokens), we seek implicit visual representation as specified generation functions defined on top of the foundation model.

\vspace{-0.2cm}
\paragraph{Framework overview}
Given a visual signal $x$, we first use vision-language model (VLM) such as GPT-5.1 \citep{openai2025gpt51} to get a detailed caption $c$. Then, conditioned on this caption $c$, we learn model adaptation parameters on top of a frozen large-scale visual generative model. As the caption $c$ is fixed for each video, in the following presentation, we will drop it for simplicity. The resulting learned adaptation parameters therefore constitute an \textit{implicit visual representation} of the input signal.

\vspace{-0.2cm}
\paragraph{Training objective}
We aim to fine-tune a diffusion (flow) model where the data distribution contains only $x$.
Following the base models used in our experiments (Qwen and Wan; \citep{wu2025qwen,wan2025wan}) which adopt a vector-field parametrization, we learn a time-dependent vector field $v_\theta(x_t,t)$ with the flow-matching objective:
\begin{align}
\mathcal{L}_{\text{FM}}(\theta)
=
\mathbb{E}_{t,\epsilon}\Big[
\big\|
v_\theta(x_t,t) - (\epsilon - x)
\big\|^2
\Big],
\label{eq:fm_delta}
\end{align}
where $\epsilon\sim\mathcal{N}(0,I)$, and $t\sim\mathrm{Unif}[0,1]$. Here, $x_t=(1-t)x+t\epsilon$ denotes the linear interpolant between $x$ and $\epsilon$. 
\par
After training, at generation time, we start from a random sample $x_1\sim \mathcal{N}(0, 1)$, and either integrate the corresponding flow ODE or simulate the associated diffusion SDE:
\begin{align}
&\mathrm{d}x_t = v_\theta(x_t, t) \mathrm{d}t \label{eq:ode_}\\
    &\mathrm{d}x_t = \left(\frac{x_t}{1-t} + 2v_\theta(x_t, t)  \right)  \mathrm{d}t + \sqrt{\frac{2t}{1-t}} \mathrm{d}\overleftarrow{w}_t\label{eq:sde_}
\end{align}
where $\mathrm{d}w_t$ represents the Wiener process (Brownian motion), and the arrow indicates the direction of the process.
We derive the form of the SDE in Appendix~\ref{app:fm_diff}.

\par

\vspace{-0.2cm}
\paragraph{Interpretation of the training objective}
The training objective in  \Cref{eq:fm_delta} aims to find a function reproducing \(x\).
However, such a function is generally not unique: many different generation functions can all lead to the same final reconstruction.
This raises a natural question: which function should we prefer?
For compression, we seek the \emph{simplest} function that achieves the desired reconstruction, in the sense of \emph{deviating from the pretrained model as little as possible}.
In fact, assuming the pretrained model and our model both follow the diffusion process (\Cref{eq:sde_}), we can show our objective achieves the simplest function from the minimum description length (MDL) perspective. 
\par
A pretrained model induces a reference path measure $\mathbb{P}$ on SDE trajectories in the path space $C([0,1],\mathcal{X})$.
Our new model induces another path measure $\mathbb{P}'$ on the same path space.
The expected excess ``codelength" incurred 
when we encode a sample from  $\mathbb{P}'$  using $\mathbb{P}$ as the coding distribution is the relative entropy $D_{\text{KL}}[\mathbb{P}'\|\mathbb{P}]
$.
Therefore, we want to find $\mathbb{P}'$ that deviates from the pretrained path measure $\mathbb{P}$ as little as possible while enforcing the terminal state $x$.
Let $\{x_t\}_{t\in[0,1]} \sim \mathbb{P}'$ denote a trajectory sampled from $\mathbb{P}'$, and write $x_0$ for its state at $t=0$, we want to solve:
\begin{align}
    \min_{\mathbb{P}'} \ D_{\text{KL}}[\mathbb{P}'\|\mathbb{P}]
    \quad \text{s.t.}\quad x_0 = x .
\end{align}
The optimal solution\footnote{
Conditioning on $x_0=x$ produces a measure $\mathbb{P}'$ that is singular w.r.t.\ $\mathbb{P}$ on $C([0,1],\mathbb{R}^d)$, so the KL on the full interval $[0,1]$ is infinite.
We therefore interpret $D_{\mathrm{KL}}(\mathbb{P}'\|\mathbb{P})$ as the KL in restricted path space on $(0,1]$ (equivalently, on $[\tau,1]$ for $\tau>0$ with $\tau\downarrow 0$).
}
 is the base process $\mathbb{P}$ conditioned on the terminal event $x_0=x$, which can be represented via a Doob's-$h$ transform of $\mathbb{P}$.
Assuming the pretrained diffusion model is perfectly trained,
the minimizer of \Cref{eq:fm_delta} recovers this solution.
The proof follows directly from the definition of the Doob's-$h$ transform together with Bayes' rule; we include it in Appendix~\ref{app:doobh}.

\vspace{-0.2cm}
\paragraph{Benefit of functional representations} Functional representation brings two key benefits. 
First, it lets us directly leverage pretraining, as the compressed representation only needs to specify what is missing from the base model’s default generation process. 
Second, a functional representation is not a fixed, one-shot code: 
the function itself naturally acts as a prior and remains controllable even after encoding. 
This means we can steer or refine generation while keeping the representation itself unchanged.

\subsection{Compression with One-Vector Adaptations}\label{subsec:details}

While the above objective arises from a KL-constraint perspective on the path measure,  modern neural networks are highly over-parametrized.
Therefore, in practice, we still need to apply an explicit rate constraint over the adaptation.

\vspace{-0.15cm}
\paragraph{One-vector adaptation}
First, instead of directly fine-tune the entire network, we only train an adaptation via 
Low-rank adaptations (LoRA) \citep{DBLP:conf/iclr/HuSWALWWC22}. 
For a pretrained weight matrix $\mathbf{W}_0 \in \mathbb{R}^{m \times n}$, LoRA models the weight update as a low-rank decomposition
\begin{equation}
    \Delta \mathbf{W} = \mathbf{A}\mathbf{B},
\end{equation}
where $\mathbf{A} \in \mathbb{R}^{m \times r}$ and $\mathbf{B} \in \mathbb{R}^{r \times n}$ with rank $r \ll \min(m,n)$. 
The new weight is given by $\mathbf{W} = \mathbf{W}_0 + \Delta \mathbf{W}$. 

Although the rank $r$ is typically small, modern visual generative models contain a large number of layers.
Equipping each with its own pair of LoRA matrices $(\mathbf{A}, \mathbf{B})$ keeps the total number of adaptation parameters substantial.
To further reduce the number of parameters, we draw inspiration from the hashing trick used in neural network compression~\citep{chen2015compressing,DBLP:conf/iclr/HavasiPH19,muller2022instant}:
instead of storing separate adaptation parameters for each layer, we map all LoRA parameters into a single shared vector through a fixed projection, randomly generated by a pseudo-random number generator (PRNG).
This design enforces parameter sharing across layers and encodes the entire adaptations into a compact vector $\mathbf{v} \in \mathbb{R}^{1 \times k}$. Note that a closely related formulation has recently been proposed under the name Uni-LoRA~\citep{li2025unilora}.

\begin{figure}[t]
 \centering
\includegraphics[scale=0.66, clip, trim=17.8cm 4.7cm 5cm 4cm]{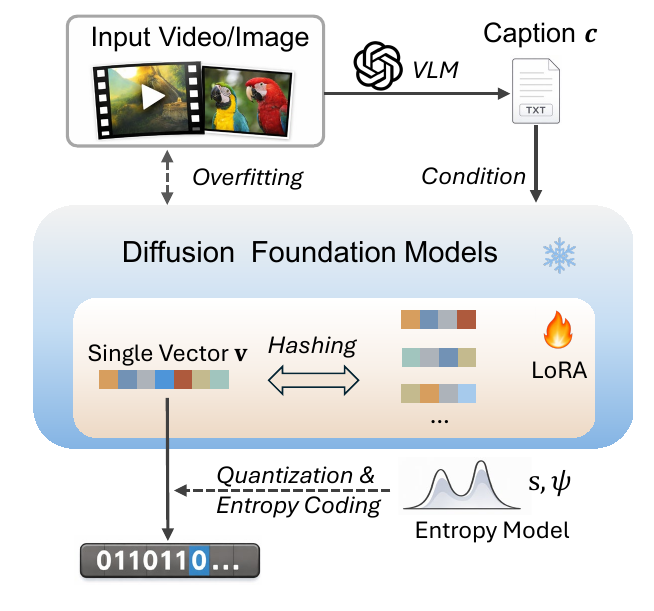}
 \caption{Framework for visual compression with a single vector that adapts the diffusion foundation model.}
\label{figure2}\vspace{-0.4cm}
\end{figure}

\vspace{-0.15cm}
\paragraph{Entropy constraint}

While the one-vector formulation significantly reduces the number of adaptation parameters, the vector represented in floating-point precision (e.g., 32-bit or 16-bit) remains inefficient for extreme compression. 
We hence impose an explicit entropy constraint on the vector representation $\mathbf{v}$ and perform training-aware quantization.

Specifically, we (1) introduce a learnable scale parameter $s$ to normalize $\mathbf{v}$ before quantization, and (2) apply uniform quantization using rounding, with additive uniform noise during training to enable differentiable rate estimation, following \citet{DBLP:conf/iclr/BalleLS17}. 
To model the bitrate, we adopt a factorized entropy model \citep{DBLP:conf/iclr/BalleLS17} with learnable parameters $\boldsymbol{\psi}$, which assumes independent distributions over vector elements.

With this entropy-constrained formulation, the one-vector adaptation can be quantized to approximately 1-3 bits per parameter while preserving good reconstruction quality. 
The additional parameters for the entropy model are transmitted in full precision: since the scale parameter is a scalar and entropy model relies on 8 distributional parameters (following \citet{DBLP:conf/iclr/BalleMSHJ18}), the overhead is negligible.

\subsection{Inference-Time Scaling and Control} \label{sec:infer-scaling}
As motivated in \Cref{subsec:overview}, representing a visual signal as a function provides an important benefit: 
after we obtain the one-vector adaptation, we can still control the generation process to enable modifications, improvements, or trade-offs, without changing the one-vector adaptations itself.

In this section, we highlight two representative directions that illustrate this benefit: (1) leveraging the idea of inference-time scaling, we can obtain substantial reconstruction gains while incurring negligible overhead in code length; 
(2) beyond compression, the adaptations can be repurposed as an instance-level memory, enabling downstream reuse for personalized generation.
Additionally, in Appendix~\ref{app:dp_trade_off}, we demonstrate that by intentionally terminating the generation process early, we can achieve an inference-time distortion--perception (DP) trade-off for free. We now describe these two points in more detail.

\vspace{-0.15cm}
\paragraph{Inference-time scaling with sampling}

Inference-time scaling \citep{ma2025inference} for diffusion models uses extra computation at decode time to improve sample quality. 
A common strategy is to branch the SDE trajectory by drawing multiple candidates at each denoising step, then apply selection procedures such as sequential Monte Carlo (SMC) \citep{wu2023practical,singhal2025general,he2025rne} or search \citep{zhang2025inference} to retain the most promising candidates.
\par
When we compress data via a functional representation, these inference-time scaling ideas carry over seamlessly:
after the encoder obtains the LoRA adaptation, they can run the generation process using a pseudo-random number generator (PRNG) shared with the decoder.
At every time step, the encoder samples multiple particles and then selects the most promising one.
The selected particle is then encoded by its index using a small amount of side information.
The decoder can then deterministically reproduce the same chosen particle at every step using the adaptation and the shared PRNG.
Note that the scaling happens mostly  on the encoding side, while decoding remains relatively fast; hence this can be viewed as  \emph{encoding-time scaling}.
\par
Let us now make the algorithm more concrete to our setting. 
First, we now assume both the encoder and the decoder have access to the vector field $v_\theta$, parametrized by the pretrained generative model, plus our compressed adaptation.
Then, in order to branch the denoising trajectory with multiple particles, we adopt the SDE formulation in \Cref{eq:sde_}.
We integrate the SDE numerically by Euler-Maruyama method.
Concretely, we discretize the entire time horizon in $N$ steps, with boundaries $t_0=0 < t_1 < \cdots < t_N=1$ and step size $\Delta t=|t_n-t_{n-1}|$. 
Each denoising step is given by a Gaussian transition kernel:
\begin{align}
     &p(x_{t_{n-1}}| x_{t_n})= \mathcal{N}(x_{t_{n-1}}| \mu_\theta(x_{t_{n}}), \frac{2t_n}{1-t_n}\Delta t)\\
    & \mu_\theta(x_{t_n}) = x_{t_n} - \left(\frac{x_{t_n}}{1-t_{n}} + 2v_\theta(x_{t_n}, t_n)\right)\Delta t
\end{align}
As this distribution is available to both the encoder and the decoder, the encoder can use the shared PRNG to draw multiple samples and select the best candidate. 
The remaining question is: what criteria should the encoder use to determine the best sample?

To answer this, let's recall the training objective in \Cref{eq:fm_delta}. 
Its minimizer actually enjoys an analytical form: 
\begin{align}\label{eq:one_step}
  v^*(x_t,t)=\epsilon-x \xlongequal{x_t=(1-t)x+t\epsilon} {(x_t-x)}/{t}
\end{align}
For any input $x_t$ and $t$, the encoder can access this form, as it is then just a linear function of $x$.
This means that the encoder actually access to the optimal denoising kernel:
\begin{align}\label{eq:analytical_p}
     &p^*(x_{t_{n-1}}| x_{t_n})= \mathcal{N}(x_{t_{n-1}}| \mu^*(x_{t_{n}}), \frac{2t_n}{1-t_n}\Delta t)\\
    & \mu^*(x_{t_n}) = x_{t_n} - \left(\frac{x_{t_n}}{1-t_{n}} + \frac{2(x_{t_n} - x)}{t_n}\right)\Delta t
\end{align}
Therefore, the encoder can perform \emph{importance sampling}, with  $p(x_{t_{n-1}}| x_{t_n})$ as the proposal and $p^*(x_{t_{n-1}}| x_{t_n})$ as the target,  to select the best particle.
Precisely, we draw $M$ samples $x_{t_{n-1}}^{(1:M)}$ from  $p(x_{t_{n-1}}| x_{t_n})$, and then select an index from a categorical distribution over ${1,\dots,M}$, where each probability is proportional to the importance weight
\begin{align}
    w^{(m)} \propto
\frac{p^*(x_{t_{n-1}}^{(m)}| x_{t_n})}
{p(x_{t_{n-1}}^{(m)}|x_{t_n})}.
\end{align}
This ensures the selected one follows $p^*(x_{t_{n-1}}| x_{t_n})$ when $M\rightarrow\infty$.
The encoder repeats this sequentially for $n=N,N-1,\dots,1$, until the final time step.
We summarize the procedure in \Cref{alg:scaling}.
This algorithm resembles Sequential Monte Carlo \citep[SMC, ][]{del2006sequential}, while only a single particle rather than the full particle population is retained at each step.

\begin{figure*}[t!]
 \centering
\begin{subfigure}{0.31\linewidth}
 \centering
 \includegraphics[scale=0.30, clip, trim=0cm 0.6cm 0cm 0.2cm]{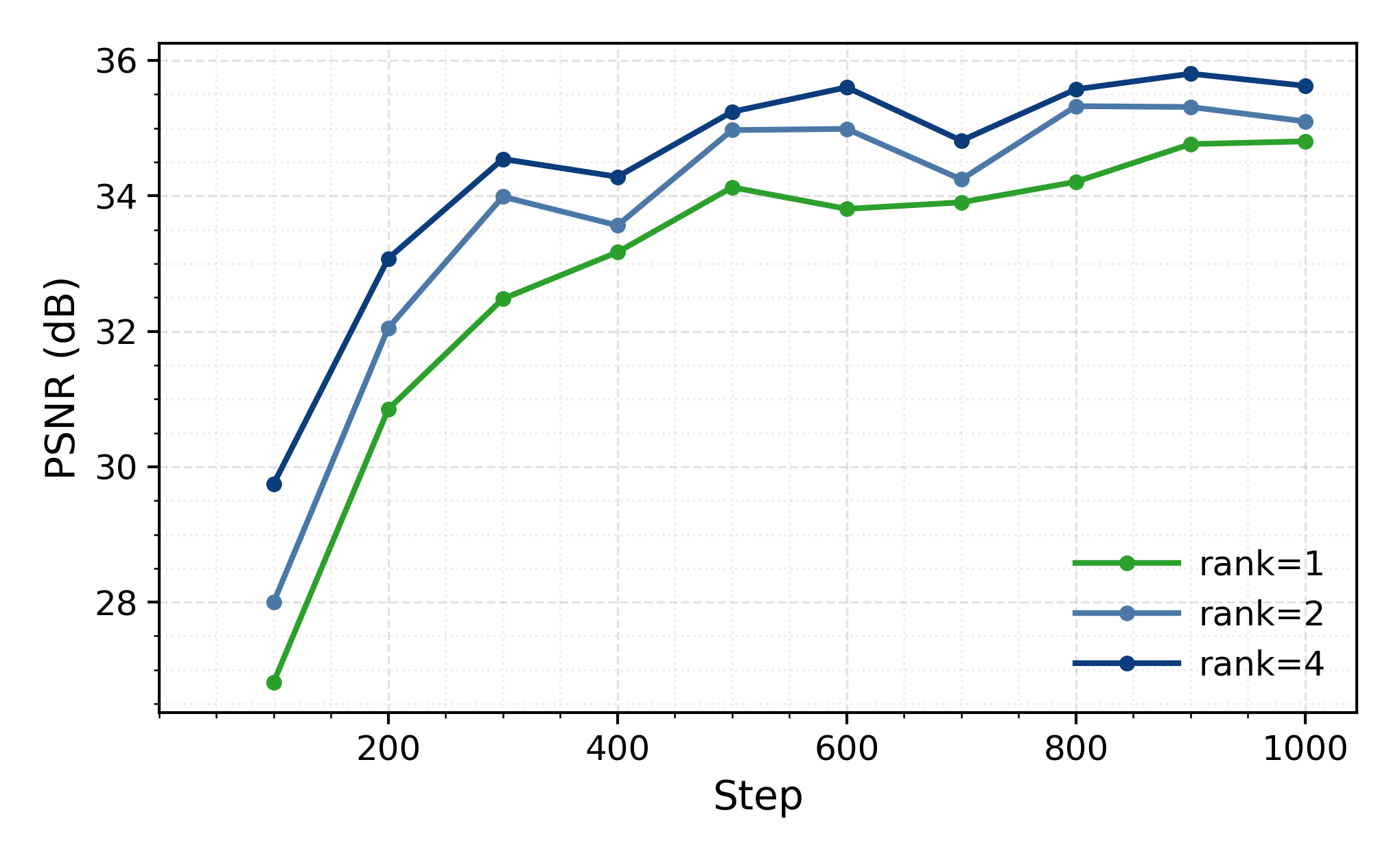}
\vspace{-0.15cm}
 \caption{\label{fig:rep_a}}
\end{subfigure}
\hfill
\begin{subfigure}{0.31\linewidth}
 \centering
 \includegraphics[scale=0.30, clip, trim=0cm 0.6cm 0cm 0.2cm]{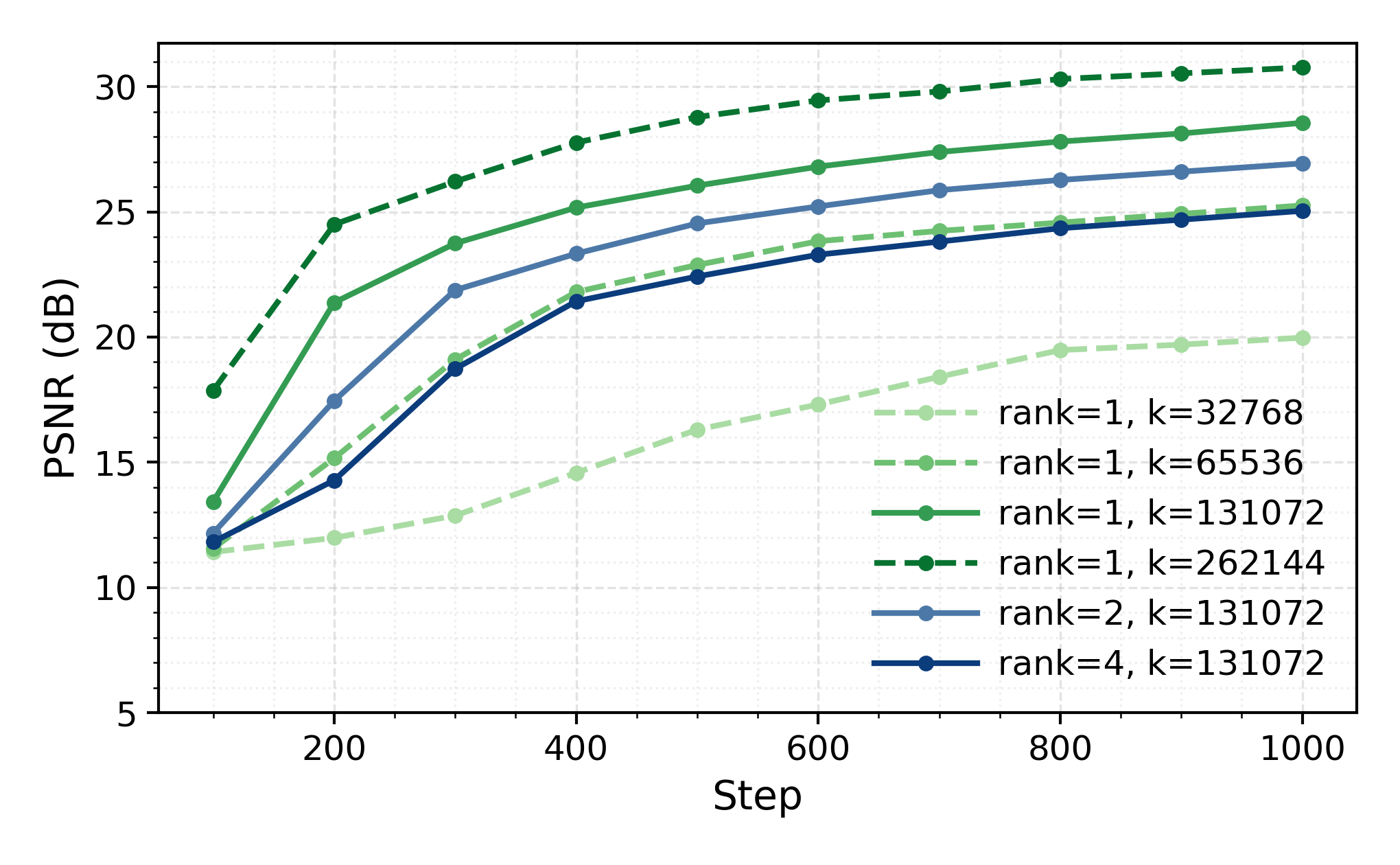}
\vspace{-0.15cm}
  \caption{\label{fig:rep_b}}
\end{subfigure}
\hfill
\begin{subfigure}{0.31\linewidth}
 \centering
 \includegraphics[scale=0.30, clip, trim=0cm 0.6cm 0cm 0.2cm]{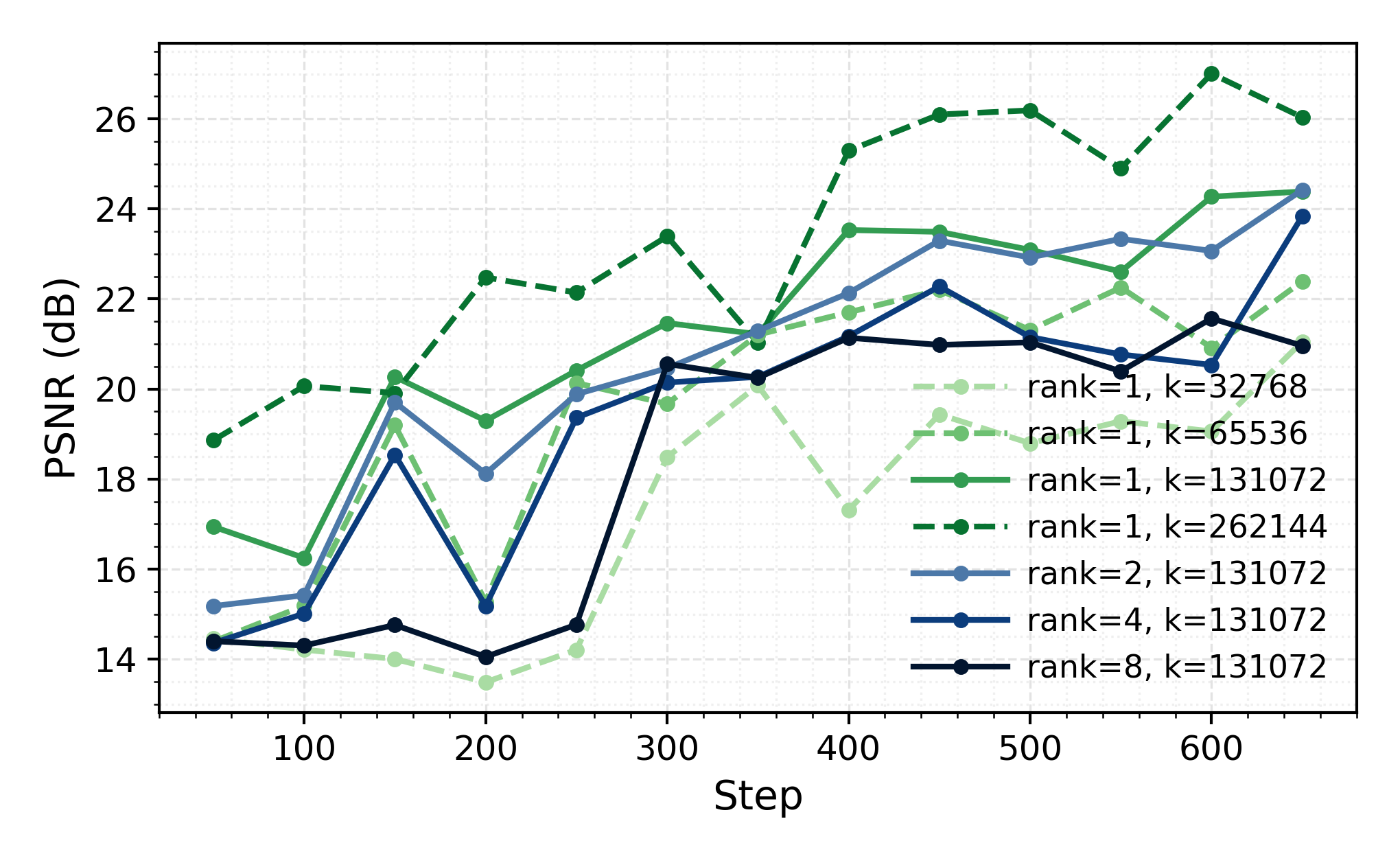}
\vspace{-0.15cm}
  \caption{\label{fig:rep_c}}
\end{subfigure}
\vspace{-0.15cm}
\caption{Reconstruction quality v.s. training step. (a) Common LoRA representations with different ranks for image \textit{Kodim03} from Kodak dataset \citep{Kodakdataset}. (b) One-vector representations for image \textit{Kodim03}, varying LoRA rank and vector size after hashing. (c) One-vector representations for video \textit{Beauty} from UVG dataset \citep{mercat2020uvg}.}
\label{fig:represent}\vspace{-0.15cm}
\end{figure*}

Notably, this procedure can also be viewed as a variant of the Diff-C algorithm \citep{theis2022lossy}, where the importance sampling with a fixed shared seed is a realization of the relative entropy coding \citep{DBLP:conf/iclr/HavasiPH19,theis2022algorithms}.
Therefore, we can interpret our inference-time scaling algorithm from two complementary perspectives: 
(1) from the inference-time refinement perspective, we perform importance sampling to refine the learned generative dynamics induced by $v_\theta$;
and (2) from the Diff-C perspective, we use the diffusion with adaptation as a stronger prior, thereby reducing the complexity of relative entropy coding.
We include more background on relative entropy coding and discussion on the connections in Appendix~\ref{app:scaling}-\ref{app:rec_diffc}.

\vspace{-0.1cm}
\paragraph{Broader applicability: Functional representation via adaptation for generation}

The functional representations in our framework are stored directly as parameter adaptations attached to a generative model, allowing visual information to persist for new generation. 
The adaptations are learned jointly with captions that serve as semantic anchors, linking specific visual entities to adaptations. 
Even when the captions are changed, the model still remembers the entities via the adaptations.
In this sense, it is consistent with the LoRA-based customization \citep{ruiz2023dreambooth,kumari2023multi,ren2024customize,meral2025contrastive}, potentially enabling editing and composition via text prompts.
It supports both faithful reconstruction and potentially flexible editing, bridging compression and generative modelling.

\vspace{-0.2cm}
\section{Experiments}

\subsection{Image and Video Representation}

We first evaluate the representation capability of the proposed implicit visual representations.
Following the standard LoRA setup \citep{DBLP:conf/iclr/HuSWALWWC22}, we fine-tune all the transformer blocks in the frozen generative model by adapting query, key, value, and output projection layers to fit a given visual signal.
Both the LoRA rank and the number of fine-tuning iterations are adjusted for studying their impact on reconstruction quality.
Detailed experimental settings are described in Appendix~\ref{app:represent_exp}.

\vspace{-0.1cm}
As shown in Figure~\ref{fig:rep_a}, increasing the common LoRA rank consistently improves reconstruction quality, while even a minimal setting of $\text{rank}=1$ is sufficient to achieve reasonable reconstructions. 
Second, reconstruction performance exhibits a clear upper bound, which we attribute to the inherent information loss introduced by the tokenizer of the underlying image or video generative model.

\vspace{-0.2cm}
\paragraph{One-Vector Representation}
We further demonstrate that a single vector $\mathbf{v} \in \mathbb{R}^{1 \times k}$, obtained by hashing the optimized LoRA parameters, can effectively encode the visual content of either a single image or an 81-frame 480p video. As illustrated in Fig.~\ref{fig:rep_b} and Fig.~\ref{fig:rep_c}, increasing the vector dimensionality $k$ consistently improves reconstruction quality. Interestingly, when the vector size $k$ is fixed, we observe that increasing the LoRA rank leads to degraded reconstruction performance. This counter-intuitive behavior highlights a non-trivial interaction between LoRA capacity and the one-vector encoding process. We hypothesize that higher-rank LoRA adaptations introduce denser and more entangled parameter updates, which become difficult to preserve under a fixed-size hashing scheme.

\subsection{Perceptual Video Compression}

After representing the visual signal into a single-vector adaptation, we can further quantize this vector and apply entropy coding, yielding an extremely low-bitrate visual compression framework. Under this setting, the compressed representation prioritizes to boost perceptual reconstruction quality when decoded through a powerful generative model. We refer to this compression framework as \textit{\textbf{V}ision (Video) in \textbf{O}ne \textbf{V}ector (\textbf{VOV})}. Here, we demonstrate that VOV achieves strong perceptual video compression performance compared with existing video codecs.

\vspace{-0.2cm}
\paragraph{Settings} 
We compare \textbf{VOV} against representative neural video codecs, including both MSE-optimized \citep{jia2025towards} and perceptually optimized methods \citep{qi2025generative}.
For reference, we also report statistics from traditional video codecs, including H.265/HM \citep{HM} and H.266/VTM \citep{VTM}.
Benchmark datasets include the UVG \citep{mercat2020uvg} and HEVC B/C/E datasets \citep{flynn16common}.
We adopt Wan-2.1 (1.3B) \citep{wan2025wan} as the base video generative model.
Due to the default generation constraints of the model, videos are processed at a resolution of $832 \times 480$ with 81 frames.
For fair comparison, all benchmark videos are center-cropped and resized to $480$p, and evaluated using 81-frame clips across all methods. Training details for compression are described in Appendix~\ref{app:compression_setting}.

\vspace{-0.2cm}
\paragraph{Metrics}
We report DISTS \citep{ding2020image}, FVD \citep{unterthiner2018towards}, Peak-Signal-to-Noise Ratio (PSNR) as primary evaluation metrics.
We note that conventional pair-wise reference-based metrics such as PSNR are not well suited for the extremely low-bitrate regime considered here, as they fail to reflect perceptual quality when reconstructions are generated by stochastic generative processes. We also provide LPIPS \citep{zhang2018unreasonable} results in Appendix~\ref{app:extra_result} for completeness. 

\begin{figure*}[t]
 \centering
\includegraphics[scale=0.43, clip, trim=0.5cm 0.5cm 0.5cm 0.2cm]{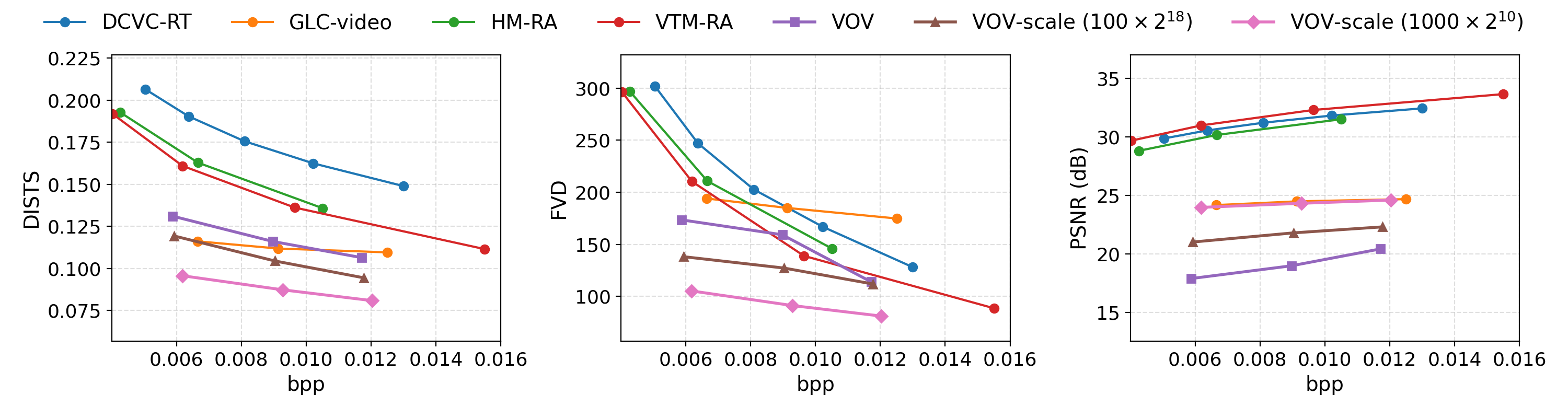}
\caption{Comparisons with existing video codecs on UVG. For DISTS and FVD, lower is better. For PSNR, higher is better.}
\label{fig:rd_compare}
\end{figure*}

\begin{figure*}[t!]
 \centering
\begin{subfigure}{0.16\linewidth}
 \centering
 \includegraphics[width=\linewidth]{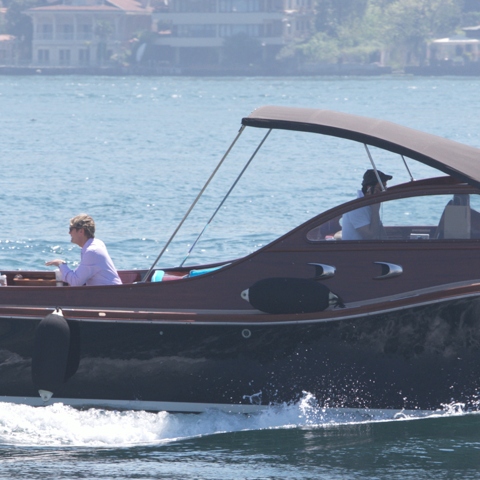}
 \caption*{\scriptsize Ground Truth \\ \ }
\end{subfigure}
\hfill
\begin{subfigure}{0.16\linewidth}
 \centering
 \includegraphics[width=\linewidth]{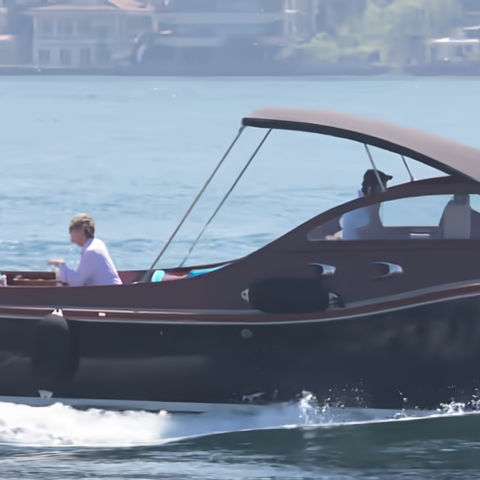}
 \caption*{\scriptsize VTM \\ 0.0145 bpp}
\end{subfigure}
\hfill
\begin{subfigure}{0.16\linewidth}
 \centering
 \includegraphics[width=\linewidth]{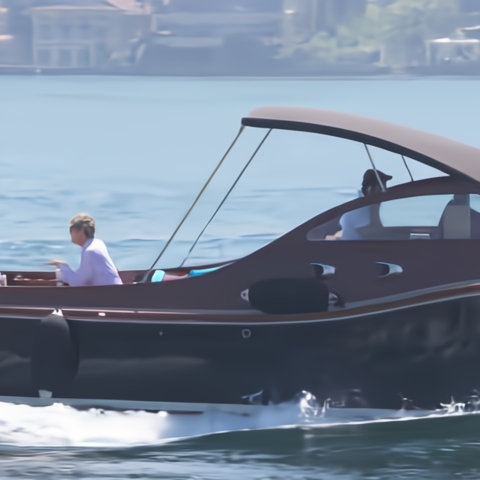}
 \caption*{\scriptsize DCVC-RT \\ 0.0123 bpp}
\end{subfigure}
\hfill
\begin{subfigure}{0.16\linewidth}
 \centering
 \includegraphics[width=\linewidth]{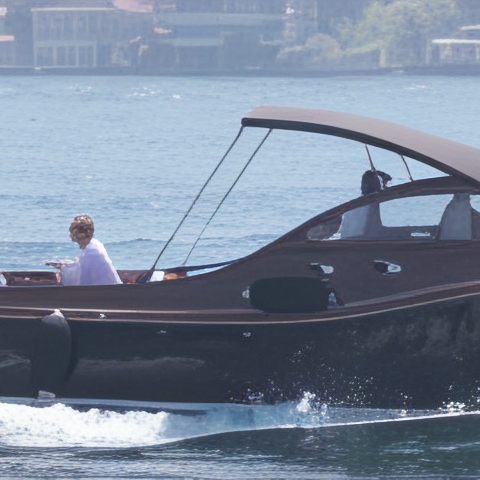}
 \caption*{\scriptsize GLC-Video \\ 0.0120 bpp}
\end{subfigure}
\hfill
\begin{subfigure}{0.16\linewidth}
 \centering
 \includegraphics[width=\linewidth]{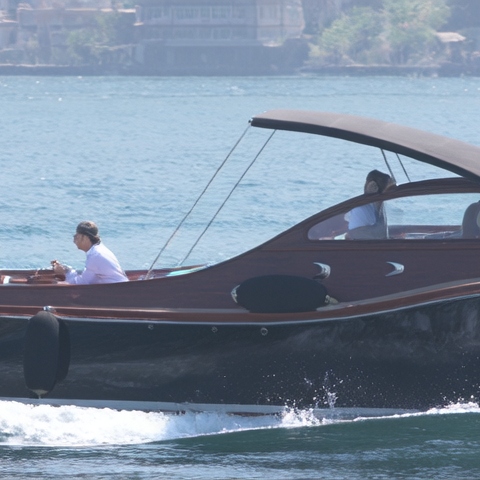}
 \caption*{\scriptsize VOV \\ 0.0109 bpp}
\end{subfigure}
\hfill
\begin{subfigure}{0.16\linewidth}
 \centering
 \includegraphics[width=\linewidth]{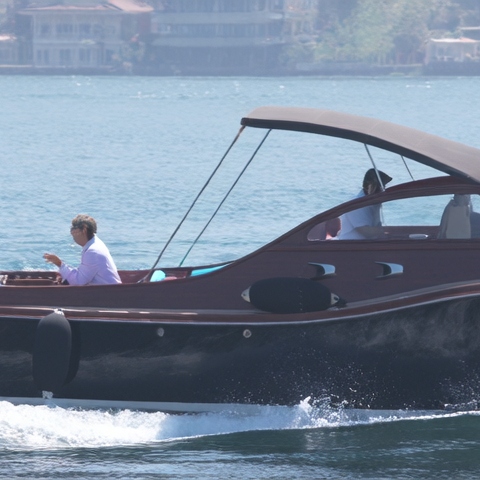}
 \caption*{\scriptsize VOV + Scaling \\ 0.0112 bpp}
\end{subfigure}

\caption{Visual comparisons with different methods. More results are in Appendix~\ref{appendix:more_visual} and supplementary materials.}
\label{fig:visual_compare}\vspace{-10pt}
\end{figure*}

\vspace{-0.2cm}
\paragraph{Comparisons}
As shown in Fig.~\ref{fig:rd_compare}, the proposed VOV achieves strong compression performance on the UVG dataset when evaluated using perceptual metrics such as DISTS and FVD.
Moreover, VOV with inference-time scaling consistently yields substantial gains with only a marginal increase in bitrate, which suggests the scaling technique improves the compression fidelity. Although the base VOV exhibits relatively low PSNR values, this does not contradict its strong perceptual performance.
Pixel-wise fidelity metrics are often weakly correlated with human visual quality, which therefore are unreliable for extremely low bitrate video compression.
This advantage is more clearly reflected in the qualitative comparisons shown in Fig.~\ref{fig:visual_compare}, where VOV reconstructs visually plausible structures and fine details that are missing in competing methods.

We further emphasize that temporal smoothness is a critical factor influencing perceptual video quality.
By leveraging the temporal priors and rich visual knowledge embedded in video diffusion foundation models \cite{wan2025wan}, videos decoded by VOV exhibit improved temporal consistency and reduced flickering artifacts, resulting in a more coherent viewing experience.
More decoded frames and videos are provided in the Appendix~\ref{appendix:more_visual} and supplementary material. 
Note captions and entropy parameters are also compressed and transmitted. The bitrate of them are included in the curves, which are negligible ($<1\%$).

\subsection{Comparison with Implicit Neural Representations}\vspace{-4pt}

We also compare the proposed method against the previous implicit neural representation-based codecs. As shown in Figure \ref{figure:compare_nvrc} in the Appendix, we report the results in terms of PSNR, DISTS and LPIPS by comparing with NVRC \cite{kwan2024nvrc} on UVG videos. We can observe a clear improvement from our method, particularly in terms of perceptual metrics (DISTS and LPIPS).
The PSNR performance of our method is also competitive and even surpasses NVRC at extremely low bitrates. 
This indicates that the diffusion backbone provides richer information.

\subsection{Inference-Time Scaling}

\begin{figure*}
    \centering
\includegraphics[scale=0.63, clip, trim=0cm 0.4cm 0cm 0.2cm]{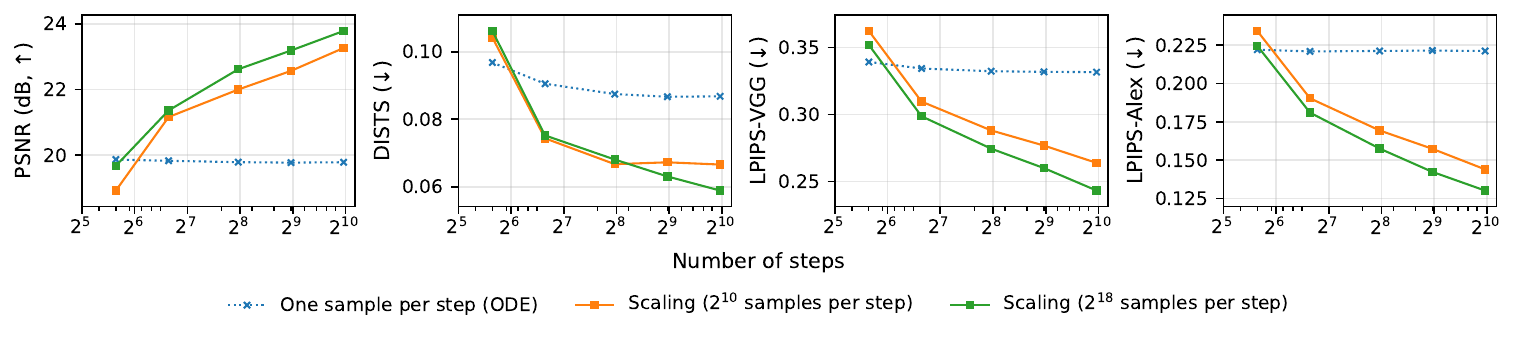}
    \caption{Reconstruction quality by inference-time scaling with different number of steps and different sample sizes per step.}
    \label{fig:scaling}\vspace{-10pt}
\end{figure*}
We now explore the inference-time scaling approach we proposed in \Cref{sec:infer-scaling}.
Note that we can scale along two axes:
(1) we can branch the denoising process with multiple samples; and (2) we can increase the number of denoising steps.
The first strategy only affects the encoding stage, while the second strategy increases the number of network evaluations and therefore incurs additional computational cost during both encoding and decoding.
\par
In \Cref{fig:scaling}, we investigate the performance with different scaling strategies on the YachtRide video from UVG dataset.
Without branching (i.e., no SDE-based multi-sample selection per step), increasing the number of denoising steps yields almost no improvement. 
In contrast, introducing multiple samples per step substantially improves performance across all metrics, and using more samples leads to consistently better quality.
One exception occurs at 50 steps, where using multiple samples performs worse. 
This is because the SDE sampler introduces larger discretization numerical error than the corresponding ODE sampler.

\begin{figure*}[t!]
 \centering
\includegraphics[scale=0.53, clip, trim=1.6cm 4.5cm 2.6cm 2.6cm]{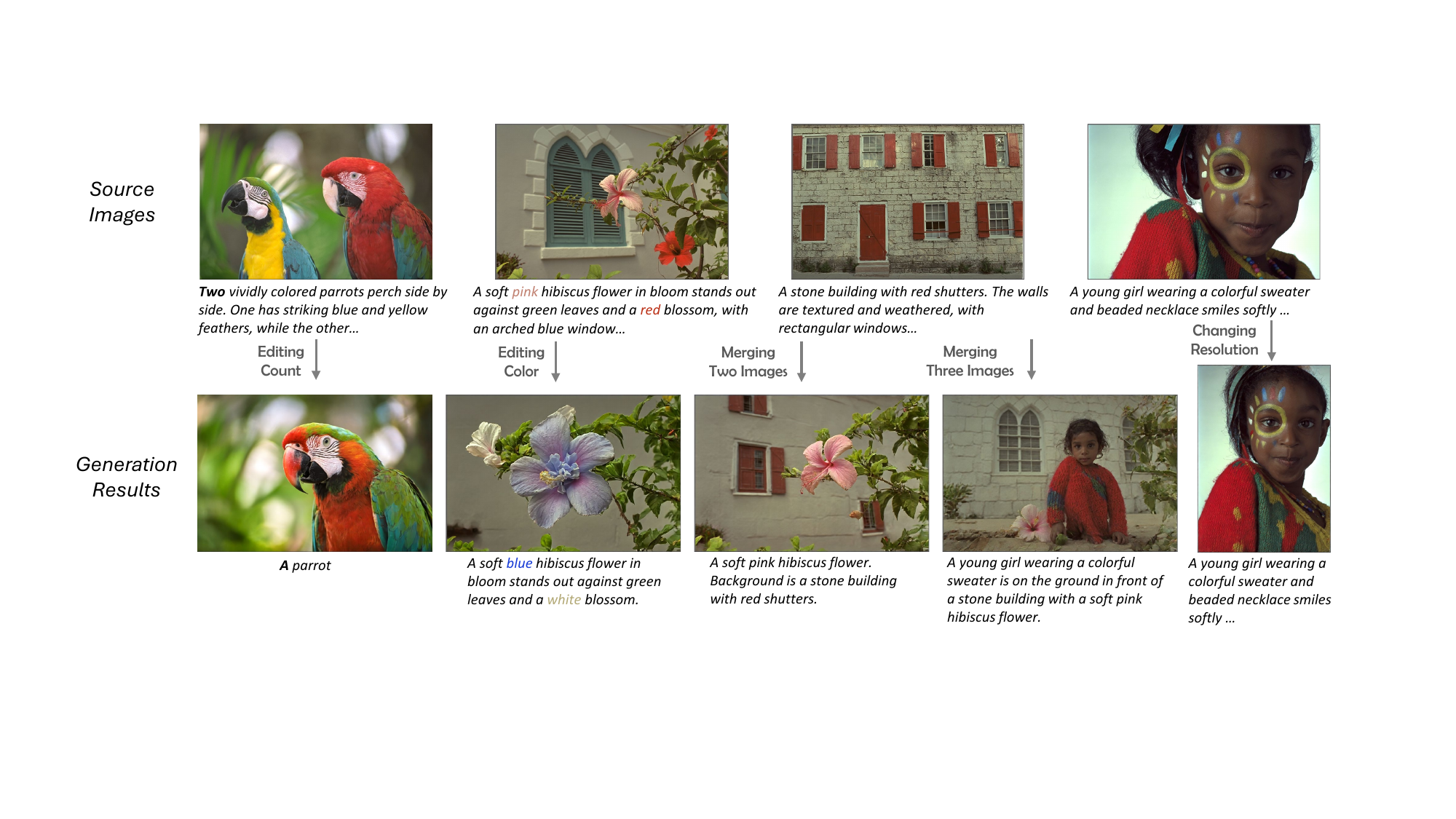}
\caption{Use LoRA-based representations for personalized generation, such as image editing or merging. Base model is the text-to-image diffusion foundation model \citep{wu2025qwen}. Note that the original base model does not accept visual input.}
\label{fig:image_editing}\vspace{-8pt}
\end{figure*}

Additionally, when using multiple samples per step, increasing denoising steps also becomes effective --- often more effective than further increasing the per-step sample size. In \Cref{fig:scaling}, comparable gains can be obtained either by scaling the per-step candidates from $2^{10}$ to $2^{18}$, or by simply doubling the number of denoising steps.
This gain, however, comes at the cost of increased network evaluations.
Therefore, although the performance trend has not yet saturated, which suggests that further increases in network evaluations could yield additional gains, we do not consider such scaling to be practical and in our main results we restrict the number of denoising steps to fewer than 1000.
\par
In \Cref{fig:rd_compare}, we compare our method against baselines under two inference-time scaling budgets. 
The first setting uses 100 denoising steps with $2^{18}$ samples per step, while the other setting uses 1000 denoising steps with $2^{10}$ samples per step.
The first setting corresponds to only encoding-time scaling, while the second requires more compute in both encoding and decoding.
Our approach with scaling becomes highly competitive with the baselines while still incurring an acceptable encoding and decoding cost, highlighting the advantage of functional representations.

\textbf{Pure scaling also achieves strong compression performance}
We further consider a pure inference-time scaling strategy that directly optimizes the sample index using the original pretrained diffusion model, without relying on a one-vector adaptation.
As shown in \Cref{fig:pure_scale}, this approach can even achieve stronger compression performance, especially when a large scaling budget is used.
However, this performance comes at the cost of many more denoising steps, making encoding and decoding substantially more expensive.
On the other hand, the learned adaptation makes the decoding process lightweight and can also achieve compact compression without scaling.
Moreover, pure scaling only encodes a discrete and fixed sample index, whereas the one-vector representation provides a continuous and functional representation that better supports editing, control, and potentially amortization.

\subsection{Distortion-Perception Trade-off via Early Stopping}
The flexibility of functional representation goes beyond inference-time scaling. 
In \Cref{app:dp_trade_off}, we discuss that one can stop earlier in the denoising process and use Tweedie's formula to obtain the final reconstruction.
We prove in Proposition~\ref{prop:error_bound} that by varying the stopping time, the reconstruction error will first decrease and then increase.
On the other hand, the perception metrics will monotonically improve, achieving an interesting distortion-perception trade-off.
We explain and demonstrate this trade-off in detail in \Cref{app:dp_trade_off,app:trade-off-w-scale}.

\subsection{Bridging Compression and Generation}

As discussed at the end of Section~\ref{sec:infer-scaling}, the proposed implicit visual representation directly modulates the generation function of a diffusion-based foundation model.
After optimizing the LoRA-based adaptation for a given visual signal, the semantic relationship between visual entities and textual prompts is effectively re-established. 
As a result, the adapted model not only can reconstruct the original signal, but also serves as visual memory that supports personalized editing or merging the encoded visual content.

To evaluate this capability, we conduct demonstrative editing experiments on a set of images from the Kodak dataset \citep{Kodakdataset}. 
Detailed settings are in Appendix~\ref{app:editing_setting}. 
After training the LoRA-based (not one-vector representation) implicit representations and injecting them into the frozen diffusion foundation model, we modify the text prompt at inference time to perform personalized editing and visual composition.
As shown in Figure~\ref{fig:image_editing}, simple prompt changes enable meaningful semantic edits, such as editing colors or numbers, merging images and changing the resolution, while mostly preserving the coherence.

In \Cref{fig:edit_img,fig:edit_video}, we show that such editing is also possible with the one-vector representation and can extend to video as well. 
However, the effectiveness of this editing is limited by the model’s capability and may reflect biases present in the training data. For instance, when modifying the model’s hair color from blonde to black, the generated face may also shift to appear Asian.

\paragraph{Generation with new prompts}
We also analyze the general generation ability when using the parameterized visual memory.
We first study the generation with totally irrelevant prompts after overfitting.
For the image in \Cref{fig:edit_img}, we can see that the general generation quality is largely preserved when the prompt is unrelated.
However, for video, as we can see from \Cref{fig:edit_video2}, the generation can be influenced by similar objects, visual styles, or motion patterns.
This indicates that while the model is able to disentangle certain factors in the representation, it does not fully disentangle all attributes, and some remain correlated.
Due to the different performance of images and video, we hypothesize that this limitation is largely due to the capacity of the base model.

In \Cref{app:trade-off}, we also provide a preliminary analysis of generation with relevant prompts.
We observe a trade-off between memorization and generation even for the image model: as overfitting (memorization) improves, the generation under relevant prompts becomes increasingly affected.

\section{Conclusion}
In this work, we propose a framework that represents a visual signal as a function describing its generation process, and compresses this function into a compact adaptation vector on top of a large-scale visual generative model.
This framework leverages the knowledge learned by the pretrained generative model, enabling compact representation and efficient compression. 
We further identify and demonstrate a key advantage of our approach---and, more broadly, functional representations: the function remains flexible at inference time, enabling further refinement and control.

\paragraph{Limitations}

(1) The proposed framework is inherently limited by the capacity of the diffusion foundation model.
As a result, we occasionally observe semantic mismatches in reconstructions, particularly for text in videos. 
(2) Similar to INR-based compression methods, the encoding process (\emph{i.e.}, overfitting the implicit representation) incurs a relatively long time.
Please see in Appendix \ref{app:coding_time} for the detailed coding complexity.
(3) The correlations between adaptation parameters are captured through a randomly assigned hashing map, which can be less effective.

\paragraph{Future Directions}
A promising future direction is to learn an amortized encoder for the vector representation, and an amortized decoder from the vector to full LoRA. 
This enables faster encoding.
This amortized encoder and decoder can also better capture correlations among adaptation parameters, thereby largely improving the compression rate.

\section*{Acknowledgment}
JH acknowledges support from the University of Cambridge Harding Distinguished Postgraduate Scholars Programme.
JMHL acknowledges funding from AI Hub in Generative Models, under grant EP/Y028805/1.

\section*{Impact Statement}

This paper presents work whose goal is to advance the field of machine learning. There are many potential societal consequences of our work, none of which we feel must be specifically highlighted here.

\bibliography{example_paper}
\bibliographystyle{icml2026}

\newpage
\appendix
\onecolumn

\printappendixtoc
\newpage

\section{Algorithm}

\begin{algorithm}[H]
\caption{Encoding-Time Scaling via Importance-Sampling}
\label{alg:scaling}
\begin{algorithmic}[1]

\REQUIRE Target $x$; adapted vector field $v_\theta$;
time grids $0=t_0<\cdots<t_N=1$; number of candidates $M$; shared PRNG.

\ENSURE Indices $\{i_n\}_{n=1}^N$.

\STATE Sample $x_{t_N} \sim \mathcal{N}(0,I)$ with shared PRNG.

\FOR{$n = N, N-1, \dots, 1$}
    \STATE Draw $M$ candidates
    $x_{t_{n-1}}^{(1:M)} \sim p_\theta(x_{t_{n-1}}\mid x_{t_n})$.
    \FOR{$m = 1,\dots,M$}
        \STATE Compute importance weight
        \[
        w^{(m)} =
        \frac{p^*(x_{t_{n-1}}^{(m)}\mid x_{t_n})}
             {p_\theta(x_{t_{n-1}}^{(m)}\mid x_{t_n})}.
        \]
    \ENDFOR
    \STATE Sample index
    $i_n \sim \mathrm{Categorical}(w^{(1:M)})$.
    \STATE Set $x_{t_{n-1}} \leftarrow x_{t_{n-1}}^{(i_n)}$.
    \STATE Store $i_n$.
\ENDFOR

\end{algorithmic}
\end{algorithm}

\section{Additional Experiments and Findings}

In this section, we demonstrate and investigate several aspects of our approaches.
These findings illustrate interesting properties of our approach, and we hope to have a deeper understanding on our approach and inspire future works by describing these findings.

\subsection{Distortion-Perception Trade-off via Early Stopping}\label{app:dp_trade_off}
In this section, we discuss an interesting finding, showing that our approach has an inherent  Distortion-Perception trade-off if stopping earlier during the generation process.

To understand this, let's recall the analytical form of $v^*$ in \Cref{eq:one_step}.
This reveals two ways to recover $x$: 
\begin{enumerate}
    \item sample \(x_1 \sim\mathcal N(0,I)\) and follow the denoising dynamics;
    \item apply the one-step map \(x=x_1-v^*(x_1,1)\) directly.
    \item combine 1 and 2: run the denoising dynamics until time \(\tau\), then perform the final reconstruction in one step using \(x=x_\tau-\tau\,v^*(x_\tau,\tau)\) . 
\end{enumerate}
With a perfectly learned vector field, all choices recover the same \(x\).
In practice, however, the switching time $\tau$ becomes a meaningful control knob, leading to a distortion-perception trade-off.

On one hand, the perceptual realism typically improves monotonically as we switch later, since a later switch better leverages the pretrained model's generative ability.
On the other hand, the distortion is typically non-monotonic w.r.t the switching time.
Switching too late increases distortion due to error accumulation, while switching too early also lead to higher distortion as the sample is still dominated by noise and the learned vector field has larger error.
\par
This non-monotonic distortion trend is characterized by the following proposition, which bounds the reconstruction error w.r.t the switching time $\tau$.
\begin{proposition}\label{prop:error_bound}
Assume the learned vector field \(v(\cdot,t)\) is \(L\)-Lipschitz for any $t$. Let \(e_t\) denote the approximation error of the learned vector field at time \(t\) along the clean path, and let \(\delta_t\) be the state error at time \(t\). 
Then the final reconstruction error \(\delta\) satisfies
\begin{align}
\|\delta\| \leq (\tau L + 1)\exp(L(1-\tau)) \int_\tau^1 \| e_t \| \mathrm{d}t  + \tau \| e_\tau\|
\end{align}
\end{proposition}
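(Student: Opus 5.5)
We compare the learned trajectory with the clean (ideal) path and then add the one-step error. Let $x^*_t=(1-t)x+t\epsilon$ be the clean path, started from the same $x_1=\epsilon$. Along it the ideal field $v^*(x^*_t,t)=\epsilon-x$ is constant, so $\mathrm{d}x^*_t=v^*(x^*_t,t)\,\mathrm{d}t$. Let $x_t$ be the learned ODE trajectory, $\mathrm{d}x_t=v(x_t,t)\,\mathrm{d}t$ with $x_1=\epsilon$. Define $\delta_t=x_t-x^*_t$, so that $\delta_1=0$, and $e_t=v(x^*_t,t)-v^*(x^*_t,t)$. The reconstruction is $\hat x=x_\tau-\tau v(x_\tau,\tau)$, while $x=x^*_\tau-\tau v^*(x^*_\tau,\tau)$. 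The final error is therefore $\delta=\hat x-x$.

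\textbf{Step 1 (state error via Gr\"onwall).} Differentiating and splitting,
\begin{align}
\frac{\mathrm{d}\delta_t}{\mathrm{d}t}=\big[v(x_t,t)-v(x^*_t,t)\big]+e_t .
\end{align}
Integrating backward from $t=1$, where $\delta_1=0$, and using the $L$-Lipschitz property gives
\begin{align}
\|\delta_s\|\le \int_s^1\|e_t\|\,\mathrm{d}t+L\int_s^1\|\delta_t\|\,\mathrm{d}t .
\end{align}
Gr\"onwall's inequality in reversed time then yields
\begin{align}
\|\delta_\tau\|\le \exp(L(1-\tau))\int_\tau^1\|e_t\|\,\mathrm{d}t .
\end{align}
Here I use that $\int_s^1\|e_t\|\,\mathrm{d}t$ is monotone in $s$, so the simple form of Gr\"onwall applies with the bound evaluated at $s=\tau$.

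\textbf{Step 2 (one-step map).} Write the reconstruction error as
\begin{align}
\delta=\delta_\tau-\tau\big[v(x_\tau,\tau)-v(x^*_\tau,\tau)\big]-\tau e_\tau .
\end{align}
By the Lipschitz property, $\|\delta\|\le(1+\tau L)\|\delta_\tau\|+\tau\|e_\tau\|$. Substituting the Step 1 bound gives exactly the claimed inequality
\begin{align}
\|\delta\|\le(\tau L+1)\exp(L(1-\tau))\int_\tau^1\|e_t\|\,\mathrm{d}t+\tau\|e_\tau\| .
\end{align}

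\textbf{Main obstacle.} The only delicate point is the setup rather than the estimates. The learned and ideal dynamics must be driven by the same initial noise $x_1=\epsilon$, so that the clean path is an exact solution of the ideal ODE. This holds because $v^*$ in \Cref{eq:one_step} is exactly $\epsilon-x$ along $x^*_t$. In addition, $e_t$ must be interpreted as the error evaluated on the clean path, not on the learned one. Once this is fixed, Step 1 is a standard backward Gr\"onwall argument and Step 2 is a triangle inequality. I would also remark that the first term grows as $\tau$ decreases, while $\tau\|e_\tau\|$ is large when $\tau$ is near $1$ and the field error is large. This explains the non-monotonic behavior in $\tau$ described above.
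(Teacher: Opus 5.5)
Your proof is correct and follows essentially the same route as the paper's. Both use the same Lipschitz splitting of the field error against the clean-path error $e_t$, the same backward Gr\"onwall bound $\|\delta_\tau\|\le \exp(L(1-\tau))\int_\tau^1\|e_t\|\,\mathrm{d}t$, and the same one-step decomposition giving $\|\delta\|\le(1+\tau L)\|\delta_\tau\|+\tau\|e_\tau\|$; you simply treat the ODE step before the one-step map and make explicit the monotonicity of $\int_s^1\|e_t\|\,\mathrm{d}t$ that the paper's appeal to Gr\"onwall leaves implicit.
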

We derive this bound in Appendix~\ref{app:bound}.
In practice, we found $\|e_t\|$ increases as $t$ grows. 
From this bound, when $\tau\rightarrow0$, the error is dominated by the integral over $e_t$, while when $\tau \rightarrow1$, the error directly arises from the $e_1$.

We empirically demonstrate this trade-off on the 7 videos from UVG dataset, trained with $\lambda=1.5e-3$.
We run inference with ODE discretized into 100 steps, early stop at every time step, and calculate PSNR, DISTS, LPIPS.
In \Cref{fig:video_trade-off}, we visualize the metrics against the early stop time $\tau$.
Note that the generation starts from $t=1$ and ends at $t=0$ when interpreting the plot.
As shown in \Cref{fig:scaling}, stopping early yields an essential gain of about 1.5 dB in PSNR. 
However, this comes at the expense of perceptual quality: perceptual metrics degrade noticeably, resulting in a clear distortion-perception trade-off.

\begin{figure}[H]
    \centering
\includegraphics[width=\linewidth]{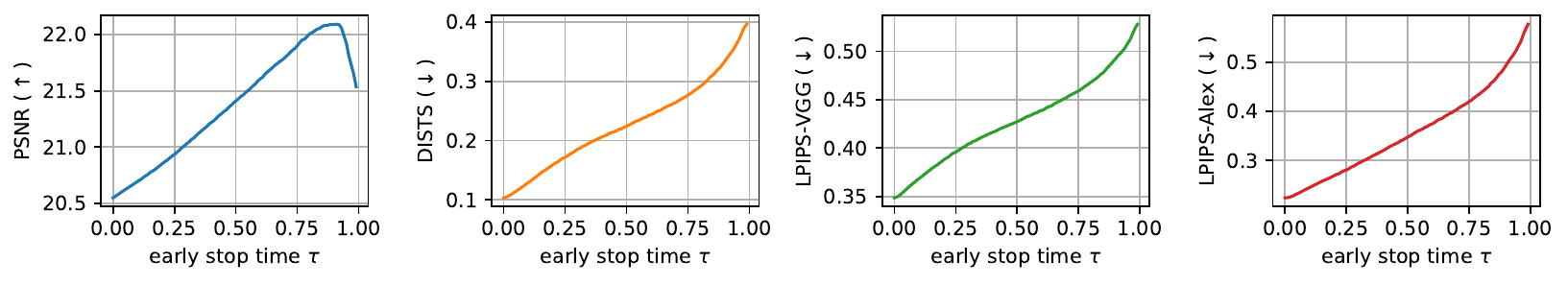}
    \caption{Distortion-Perception Trade-off via early stopping on UVG.}
    \label{fig:video_trade-off}
\end{figure}

\subsection{Distortion-Perception Trade-off via Early Stopping \emph{with Inference-time Scaling}}\label{app:trade-off-w-scale}
The above conclusion and theory are observed and derived for our method without inference-time scaling.
Therefore, a natural question is whether the DP trade-off still holds when coupled with the inference-scaling approach we proposed. 
We investigate this on the same UVG videos as in the last section. 
We perform inference-time scaling with 1,000 steps, each with $2^{10}$ samples in \Cref{fig:video_trade-off-scale}.
As we can see, both distortion and perception metrics have the same trend.
And early stop still provide a trade-off between  distortion and perception when $t$ is smaller than a certain threshold.
However, this threshold is much smaller than the case without inference-time scaling.
This is because scaling corrects the error at the early stage of denoising ($t$ close to 1), leading to a smaller error accumulation.

\begin{figure}[H]
    \centering
\includegraphics[width=\linewidth]{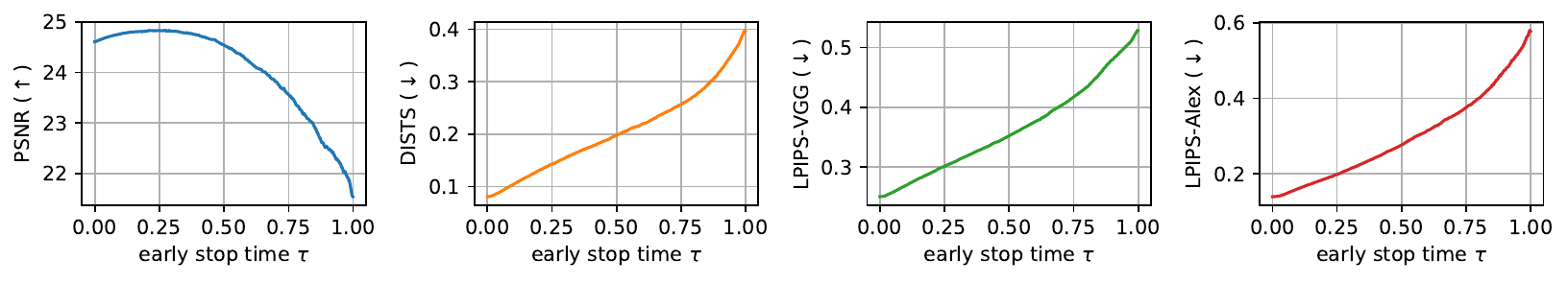}
    \caption{Distortion-Perception Trade-off via early stopping coupled with inference-time scaling on UVG.}
    \label{fig:video_trade-off-scale}
\end{figure}

\subsection{Comparison with INR Baseline}

We compare VOV with a standard INR-based codec NVRC \citep{kwan2024nvrc}.
We use the code of NVRC available at \url{https://github.com/hmkx/NVRC}.

\begin{figure}[H]
    \centering
    \begin{subfigure}[t]{0.49\linewidth}
        \centering
        \includegraphics[trim={0 12cm 0 2cm},clip,width=\linewidth]{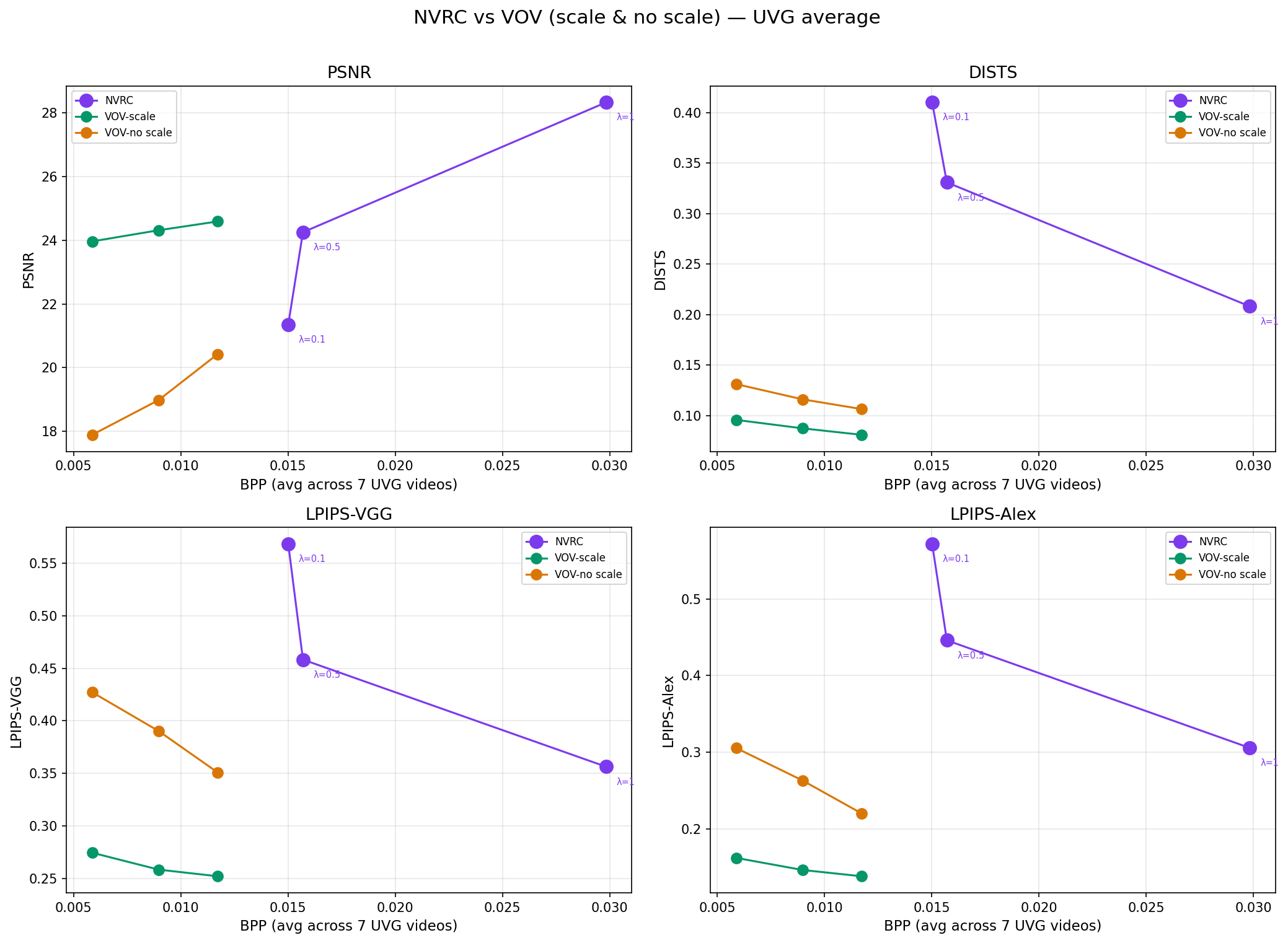}
    \end{subfigure}
    \hfill
    \begin{subfigure}[t]{0.49\linewidth}
        \centering
        \includegraphics[trim={0 0 0 14cm},clip,width=\linewidth]{./figures/nvrc_vov_metrics_vs_bpp.png}
    \end{subfigure}
    \caption{Rate-distortion comparisons with the previous INR-based video codec NVRC on \citep{kwan2024nvrc} on UVG. }\label{figure:compare_nvrc}
\end{figure}

\subsection{Pure Inference-time Scaling}\label{app:pure_scaling}
We proposed a lightweight scaling procedure on top of the one-vector code to further improve compression performance.
When allowing a more expensive scaling procedure, one can instead apply scaling directly on top of the original pretrained diffusion model, and use it to encode the target sample directly.
In this section, we study the performance of this strategy.

In \Cref{fig:pure_scale}, we report the compression performance of this direct scaling approach.
Here, $\lambda=\infty$ corresponds to using no one-vector adaptation, while other values of $\lambda$ correspond to results with one-vector adaptation, trained with different RD trade-offs.
As shown in the figure, this pure scaling strategy can, in fact, achieve more efficient compression performance.
This is not surprising: direct scaling treats the pretrained model as a black-box generative prior, whereas our one-vector approach needs to capture correlations between adapted parameters in order to obtain a compact representation.

However, achieving strong performance with direct scaling requires a larger number of denoising steps (4000-10000), making both encoding and decoding impractically expensive.
Moreover, because this scaling procedure only encodes the sample index, it does not provide a continuous representation of the sample.
In contrast, the one-vector representation gives a flexible functional representation of the sample, which naturally supports editing, further scaling, and control.
It also opens a promising future direction: learning an encoder and decoder for the representation, potentially enabling faster coding and more compact amortized compression.
\begin{figure}[H]
    \centering
    \includegraphics[width=\linewidth]{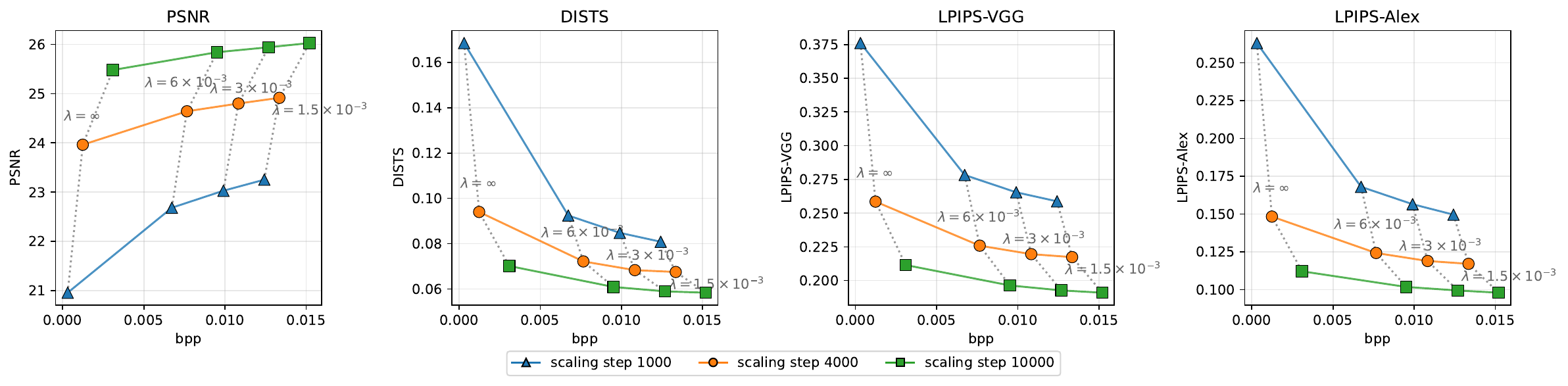}
    \caption{Compression performance of direct scaling on the pretrained diffusion model at different bitrates and scaling budgets.
Here, $\lambda=\infty$ denotes the setting without one-vector adaptation.}
    \label{fig:pure_scale}
\end{figure}

\subsection{Additional Results on Editing}

\begin{figure}[H]
    \centering
\includegraphics[width=\linewidth,trim={80, 200, 80, 200},clip]{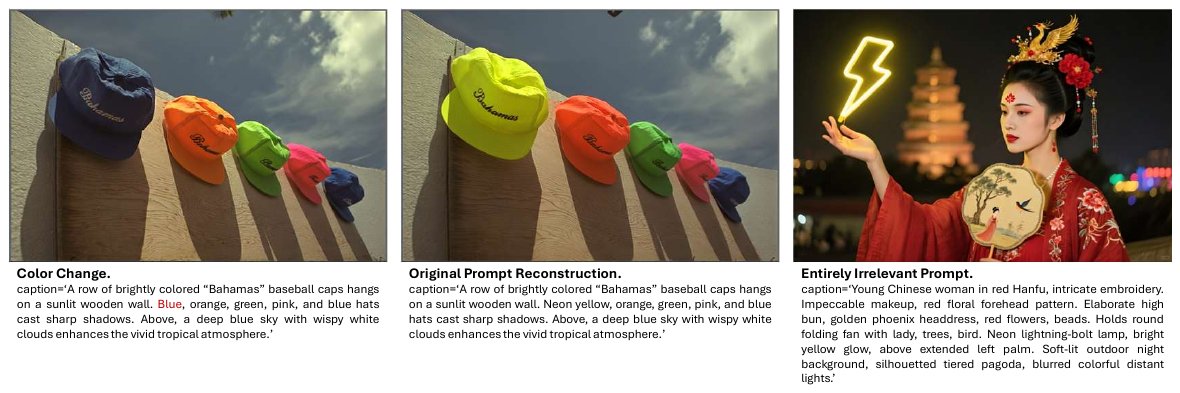}
    \caption{Image editing results with one-vector representation. In the last example, we also show that image generation ability is largely maintained.}
    \label{fig:edit_img}
\end{figure}

\begin{figure}[H]
    \centering
\includegraphics[width=\linewidth,trim={0, 200, 0, 200},clip]{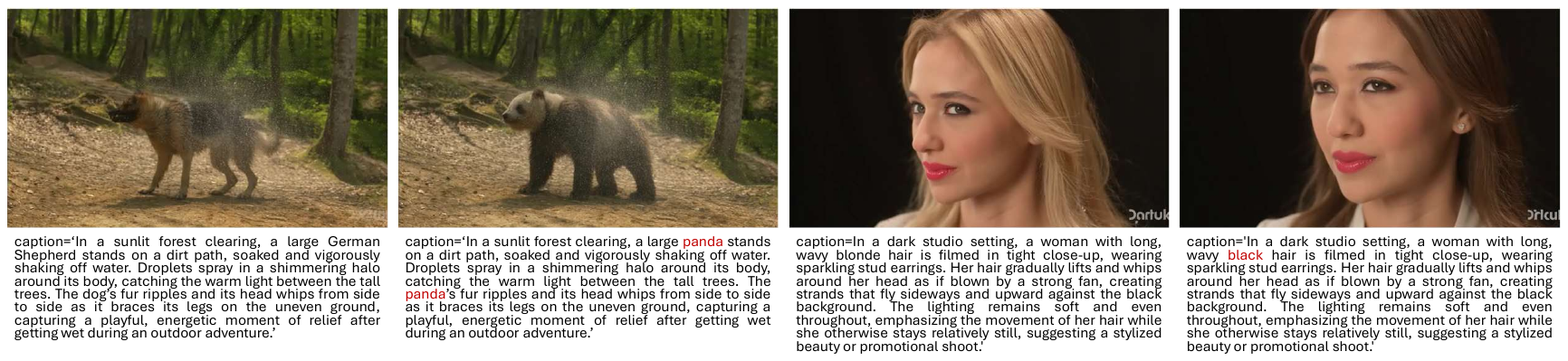}
    \caption{Video editing results on Wan-2.1-14B, with one vector of length 524288. In the first example, changing the dog to a panda results in a coherent edit. However, in the second example, due to biases in the base model, black hair is often associated with Asian individuals, which leads the model to modify the face accordingly.}
    \label{fig:edit_video}
\end{figure}

\begin{figure}[H]
    \centering
\includegraphics[width=\linewidth,trim={0, 190, 0, 190}, clip]{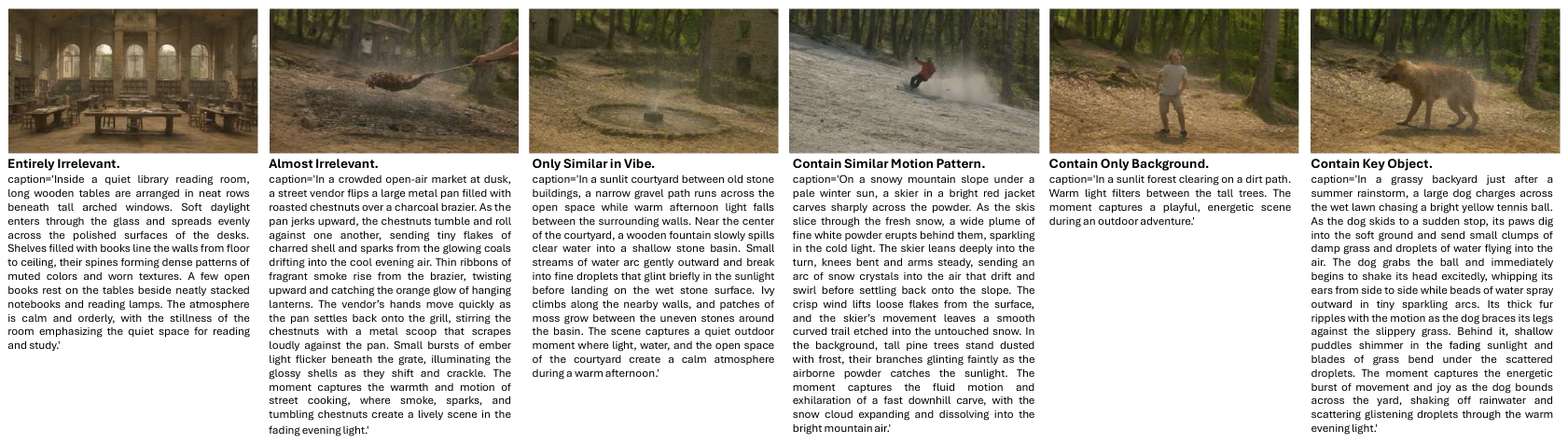}
    \caption{Overfitting in the representation can also influence the generation under irrelevant prompts. When the prompt is highly relevant to the scene or contains key objectives, the model’s output is strongly influenced and may even directly reproduce the overfitted video. For prompts that are less relevant, the influence is weaker but still observable, affecting attributes such as color, motion, or background.
    }
    \label{fig:edit_video2}
\end{figure}

\subsection{Trade-off between Generation and Memorization}\label{app:trade-off}

Beyond reconstruction, implicit visual representations can remember the sample and influence the generation behavior of the underlying diffusion foundation model. In practice, we observe that the number of fine-tuning iterations plays a critical role in balancing faithful memorization of a given visual signal and preservation of the model’s general generative capability.

\begin{figure}[H]
    \centering
    \begin{subfigure}[t]{0.97\linewidth}
        \centering
        \includegraphics[width=\linewidth]{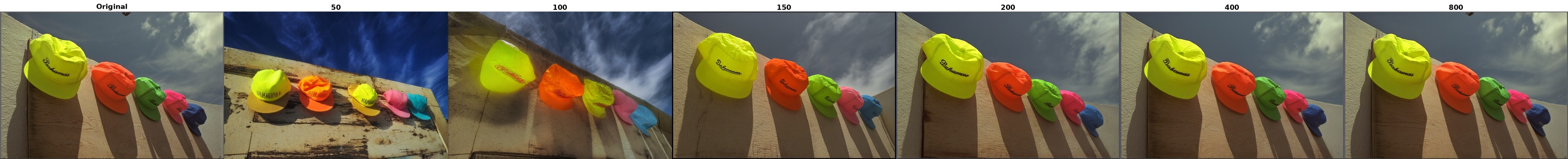}
    \end{subfigure}
    \begin{subfigure}[t]{0.97\linewidth}
        \centering
        \includegraphics[width=\linewidth]{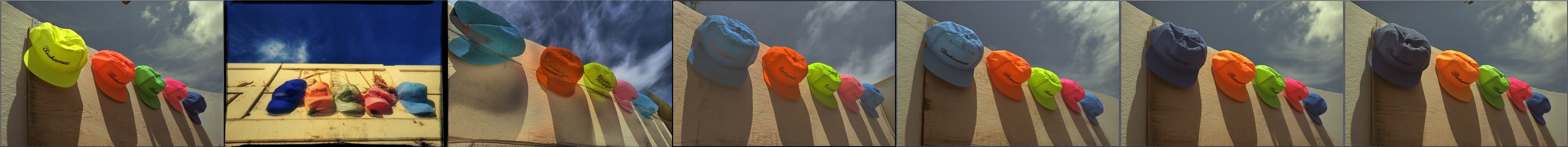}
    \end{subfigure}
    \caption{Above: Reconstruction with original image captions ``A row of brightly colored 'Bahamas' baseball caps hangs on a sunlit wooden wall. Neon yellow, orange, green, pink, and blue hats cast sharp shadows. Above, a deep blue sky with wispy white clouds enhances the vivid tropical atmosphere.''.    Below: Editing with image captions ``A row of brightly colored 'Bahamas' baseball caps hangs on a sunlit wooden wall. Blue, orange, green, pink, and blue hats cast sharp shadows. Above, a deep blue sky with wispy white clouds enhances the vivid tropical atmosphere.''\label{fig:vs-iter-comparison}}
    
\end{figure}

\begin{figure}[H]
    \centering
    \includegraphics[width=\linewidth]{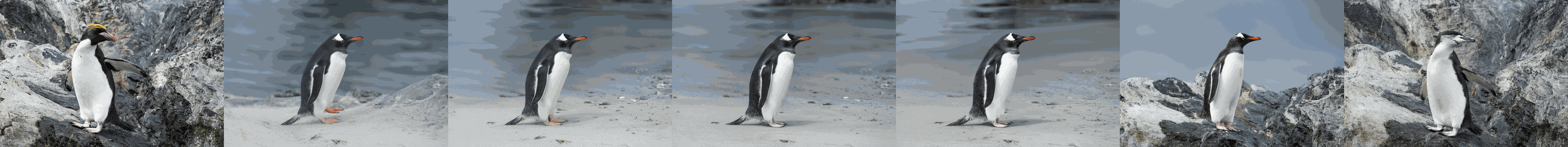}
    \caption{Overfitting prompt: ``A lone rockhopper penguin stands on rugged, dark gray coastal rocks, wings slightly outstretched. Its black-and-white body contrasts with bright yellow head crests and pink beak and feet, creating a crisp, windswept, subantarctic shoreline atmosphere.'' 
Generation prompt: ``A lone penguin.'' \label{fig:vs-iter-gen}}
    
\end{figure}

Specifically, as the model becomes increasingly overfitted to the target image, samples generated from partially related prompts become more similar to that overfitted image. We provide examples in Figure \ref{fig:vs-iter-comparison} and Figure \ref{fig:vs-iter-gen}, including the reconstructed images and the edited images in terms of different overfitting iterations. 
As we can see, more overfitting iterations generally improve the editing and reconstruction results.
However, it will also influence the generation with different prompts (Figure \ref{fig:vs-iter-gen}).

We further find that this trade-off between fitting iterations and memory capability is strongly dependent on the capacity of the underlying foundation model. Larger and more expressive models (e.g., Qwen-Image-20B \citep{wu2025qwen}) exhibit higher resilience to adaptation, requiring substantially more optimization steps (often exceeding 500 iterations) before noticeable degradation of generative ability occurs. In contrast, smaller or weaker models  are significantly more fragile: even moderate levels of overfitting can quickly disrupt their original generation behaviour, and highly irrelevant prompts can also be influenced (Wan-2.1-14B in   \Cref{fig:edit_video2}.)

\subsection{Catastrophic Mixing from Visual Memory}

As an extension to Figure \ref{fig:image_editing}, we also tried to learn a single group of LoRA weights to represent 10 images, where the content of these images are relevant in 5 pairs. As shown in Figure \ref{fig:bleeding-comparison}, there is a ``Catastrophic Mixing" phenomenon observed: the reconstructed images bleed into each other, because the prompt of entities (e.g., the penguin) got mixed connections to visual content. This suggests that the similarity between prompts/objectives appears to play an important role in LoRA-based visual memories. In contrast, if the contents of 10 images are irrelevant, the reconstructed images are much better in reconstruction quality (shown in Figure \ref{fig:bleeding-comparison-2}).

\begin{figure}[htbp]
    \centering
    \begin{subfigure}[t]{0.49\linewidth}
        \centering
        \includegraphics[width=\linewidth]{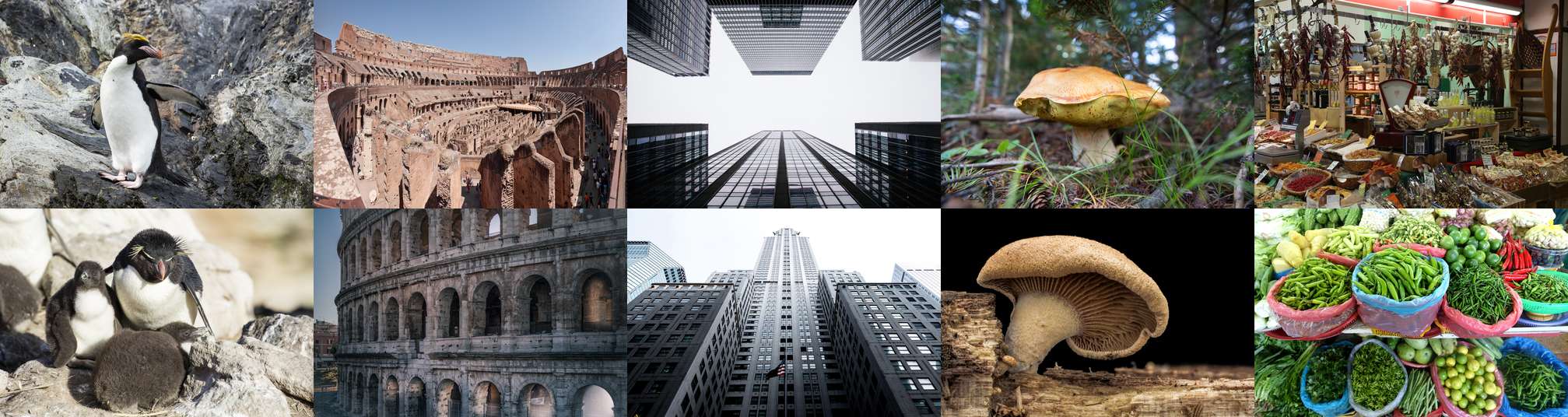}
        \label{fig:before-bleeding}
    \end{subfigure}
    \hfill
    \begin{subfigure}[t]{0.49\linewidth}
        \centering
        \includegraphics[width=\linewidth]{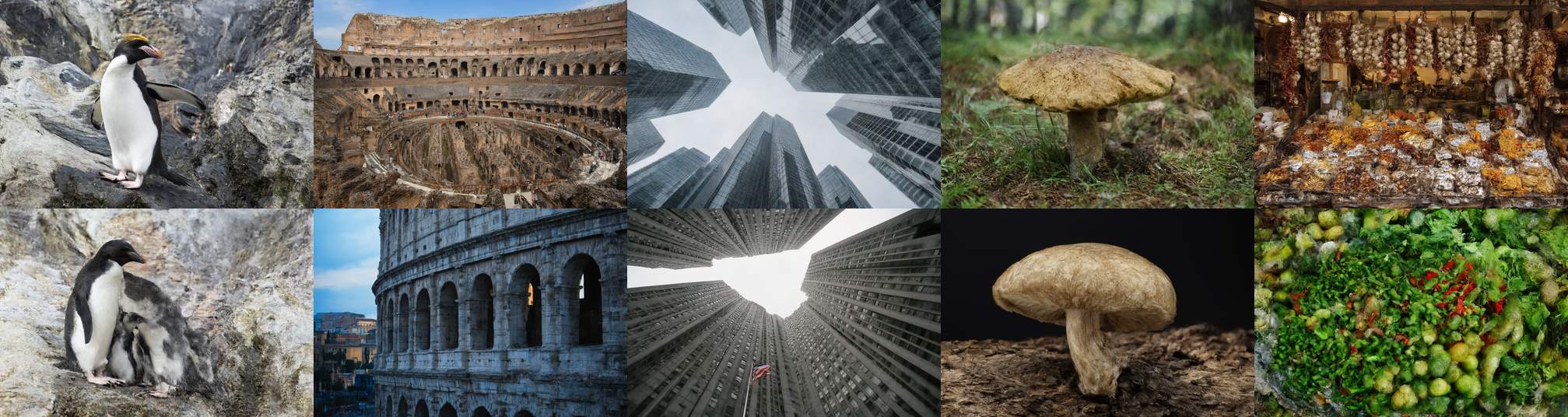}
        \label{fig:after-bleeding}
    \end{subfigure}
    \caption{Left: Original images that are used for learning a single group of implicit visual representations. Right: Reconstructed images from the learned single group of implicit visual representations. The content in two rows of images are relevant. \label{fig:bleeding-comparison}}
    
\end{figure}

\begin{figure}[htbp]
    \centering
    \begin{subfigure}[t]{0.49\linewidth}
        \centering
        \includegraphics[width=\linewidth]{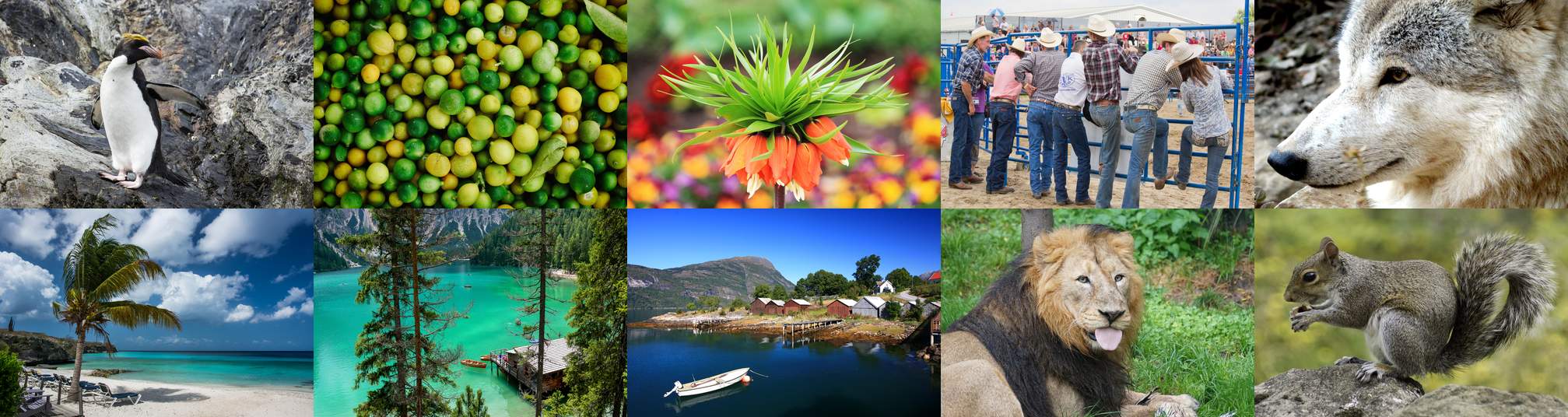}
        \label{fig:before-bleeding2}
    \end{subfigure}
    \hfill
    \begin{subfigure}[t]{0.49\linewidth}
        \centering
        \includegraphics[width=\linewidth]{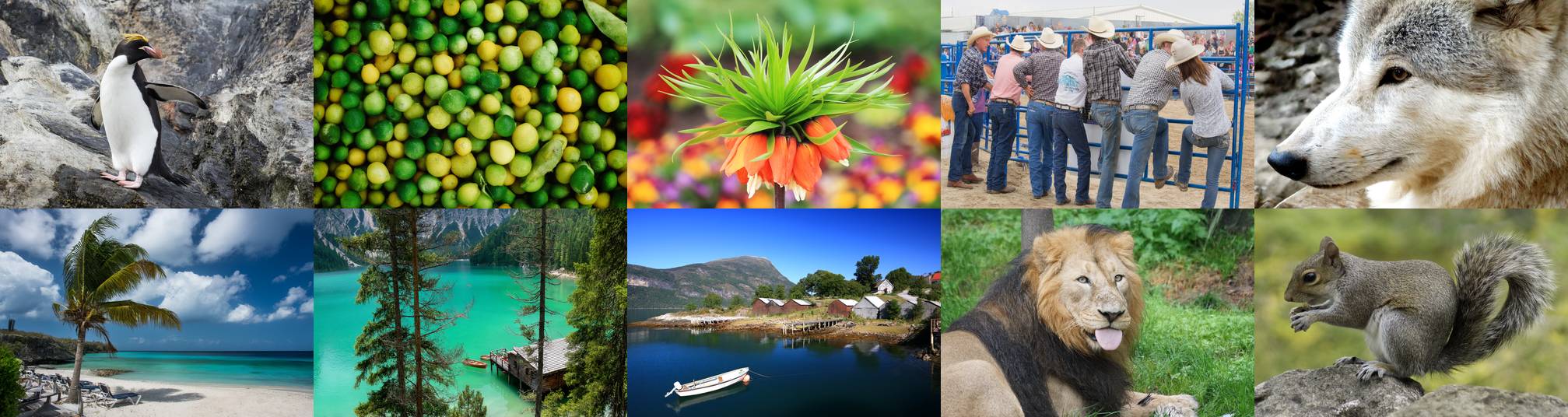}
        \label{fig:after-bleeding2}
    \end{subfigure}
    \caption{Left: Original images that are used for learning a single group of implicit visual representations. Right: Reconstructed images from the learned single group of implicit visual representations. The content in all ten images are irrelevant.\label{fig:bleeding-comparison-2}}
    
\end{figure}

\section{Additional Backgrounds and Related Works}

\subsection{Explicit + Implicit Representations for Visual Compression}

Recently, several works have explored combining explicit and implicit representations for visual compression, aiming to balance entropy efficiency and expressive reconstruction.
For images, COOL-CHIC~\citep{ladune2023cool} proposes a hybrid codec that explicitly models entropy with an autoregressive prior while decoding pixels using an implicit coordinate-based neural representation.
For videos, HNeRV~\citep{chen2023hnerv} extends the NeRV framework by hierarchically encoding video content into explicit network parameters and latent codes, while relying on an implicit neural function to reconstruct frames from continuous coordinates.
Later, NVRC~\citep{kwan2024nvrc} introduces a neural video representation that combines explicitly quantized latent variables for entropy coding with an implicit neural decoder for frame synthesis. These works can be regarded as a combination of explicit compression and implicit compression, which are shown in Figure \ref{figure1a} and \ref{figure1b}.

\subsection{LoRA as Visual Memory}
Personalization and controllable adaptation both full-parameter and parameter-efficient fine-tuning. DreamBooth \citep{ruiz2023dreambooth} demonstrates that a diffusion model can memorize a subject or visual concept by overfitting on a small set of images, effectively treating model parameters as a form of visual memory. However, this requires updating a large portion of the model and incurs significant storage and deployment cost per concept. In the video domain, methods such as Customize-A-Video \citep{ren2024customize} and VideoMage \citep{huang2025videomage} show that LoRA modules can encode motion or subject-specific priors from limited examples and be reused across novel scenes, highlighting their role as transferable motion or identity memories. LoRA-Edit \citep{gao2025lora} demonstrates that LoRA can record localized, edit-specific behaviours that persist temporally while preserving the frozen backbone. Beyond single-adapter usage, contrastive test-time composition of multiple LoRAs reveals that independently trained adapters can be combined at inference to retrieve and blend multiple visual concepts, reinforcing a memory-bank interpretation \citep{meral2025contrastive}. At the system level, V-LoRA \citep{mi2025empower} formalizes LoRA as an on-demand, deployable unit for large language models. Collectively, these works suggest that LoRA adapters function not merely as optimization tools, but as compact implicit visual memories that can be stored, composed, and injected into foundation models to control generation behavior.

\subsection{Diffusion and  Flow Matching}\label{app:fm_diff}
In this section, we provide more background on diffusion models \citep{song2020score} and flow matching \citep{lipman2022flow}, which are equivalent in certain cases.
\par
\paragraph{Diffusion models.}
Diffusion models define a forward noising SDE, corrupting data into noise:
\begin{align}\label{eq:sde_diffusion_fwd}
    \mathrm{d}x_t = \beta_t x_t \mathrm{d}t + \sigma_t \mathrm{d}\overrightarrow{w}_t, \quad x_0\sim p_\text{data}
\end{align}
$\beta_t$ and $\sigma_t$  hyperparameters, which are typically chosen so that the final state of this SDE follows Gaussian distribution.
For simplicity, without loss of generality, we now assume $t\in [0, 1]$ and $\beta_t$ and $\sigma_t$  are chosen such that $x_1 \sim \mathcal{N}(0, 1)$.

Assume the marginal density of the SDE at time step $t$ is $p_t$, the score function is defined as $\nabla \log p_t$.
By Nelson's relation \citep{nelson1967dynamical,ANDERSON1982313}, the time-reveral of the above SDE is given by
\begin{align}
      \mathrm{d}x_t = \left(\beta_t x_t - \sigma_t^2 \nabla \log p_t(x_t) \right)\mathrm{d}t + \sigma_t \mathrm{d}\overleftarrow{w}_t, \quad x_1\sim \mathcal{N}(0, 1)
\end{align}
The arrow indicates the direction of the process.
Therefore, once we define the forward SDE, we only need to find the score function to reverse it.
In practice, we train a network $ s_\theta(x_t, t)$ to approximate the score function  by denoising score matching \citep[DSM, ][]{vincent2011connection}:
\begin{align}
    \mathbb{E}_{x\sim p_\text{data}, \epsilon\sim \mathcal{N}(0, 1)} \left[
  \lambda(t)  \|
    s_\theta(x_t, t) - \nabla \log p(x_t|x_0=x)
    \|^2
    \right]
\end{align}
where $\lambda(t)$ is an optional weighting function, and $x_t|x_0=x$ is defined by the forward SDE.
In fact, the forward SDE can be written as a Gaussian kernel:
\begin{align}\label{eq;gaussian_kernel}
   & p(x_t|x_0=x) = \mathcal{N}(x_t| m_t(x), v_t)\\ 
    &m_t(x) =  \alpha_t x , \quad \alpha_t = \exp(\int_0^t \beta_s \mathrm{d}s), \quad v_t = \alpha_t^2 \int_0^t \frac{\sigma_s^2}{\alpha_s^2} \mathrm{d}s
\end{align}
Hence, when training with DSM, we can directly calculate $x_t = \alpha_t x + \sqrt{v_t} \epsilon $, and $\nabla \log p(x_t|x_0=x)$ is also analytically defined as the Gaussian kernel.
\par
\paragraph{Flow matching.}
Flow matching seeks to find a vector field $v_\theta(x_t, t)$ that transport between the data and the  Gaussian noise\footnote{Flow matching can be broadly defined between any two distributions, with arbitrary couplings. However, for simplicity, this section discuss the flow matching between Gaussian noise and data distribution, and assume the coupling between  them is dependent.}.
The vector field defines an ODE:
\begin{align}
    \mathrm{d}x_t = v_\theta(x_t, t) \mathrm{d}t
\end{align}
Note that, unlike the SDE that needs different formulation for te forward and backward process,  the ODE is already bidirectional.
To train the vector field, the most commonly choice is the conditional flow matching objective \citep{lipman2022flow}.
Concretely, we first construct an interpolant between $x_0$ and $x_1$,
denoted by $x_t$, and then mininze
\begin{align}
    \mathbb{E}_{x\sim p_\text{data}, \epsilon\sim \mathcal{N}(0, 1)} \left[
  \|
    v_\theta(x_t, t) - \partial_t x_t
    \|^2
    \right]
\end{align}
For the mostly used linear interpolant, we have
\begin{align}
    x_t=(1-t)x_0+t\epsilon, \quad \partial_t x_t = \epsilon - x_0
\end{align}

\paragraph{Conversion between ODE and SDE}

Assuming accessing the score function, one can convert the ODE to the SDE.
Assume the marginal density for the above ODE is $\log p_t$, then, the following two SDE has the same marginal density as the ODE:
\begin{align}
    & \mathrm{d}x_t = \left(v_\theta(x_t, t) + \frac{1}{2}\sigma^2_t \nabla\log p_t(x_t) \right)\mathrm{d}t + \sigma_t \mathrm{d}\overrightarrow{w}_t, \quad x_0\sim p_0 \label{eq:si_1}\\
     &\mathrm{d}x_t = \left(v_\theta(x_t, t) - \frac{1}{2}\sigma^2_t \nabla\log p_t(x_t) \right)\mathrm{d}t + \sigma_t \mathrm{d}\overleftarrow{w}_t, \quad x_1\sim p_1\label{eq:si_2}
\end{align}
This is valid for any $\sigma_t>0$.
There are models directly training the vector field and score separately, known as stochastic interpolant \citep{albergo2023stochastic}.

\paragraph{Equivalent  between flow matching and diffusion model.} 
Let's recall the flow matching objective and the DSM objective, one can realize that the learned vector field and the score network are both a linear function of $x_t$, and $\mathbb{E}[x_0|x_t]$.
Therefore, one can derive the equivalence between the vector field and the score function.
Taking the linear interpolant $x_t = (1-t)x_0 + t \epsilon$ as an example, when the vector field is trained to the optimal solution, we have
\begin{align}
    \nabla\log p_t(x_t ) = -\frac{x_t+ (1 - t) v_\theta(x_t, t)}{t}
\end{align}
Therefore, we can write \Cref{eq:si_1,eq:si_2} into SDEs where the drifts are only given by the score function and $x_t$.
\par
Further, recall that we can choose $\sigma_t$ in \Cref{eq:si_1,eq:si_2} freely.
Therefore, we can set it so that the drift of the  forward SDE in \Cref{eq:si_1} does not contains the score function.
Precisely, 
\begin{align}
    \sigma_t = \sqrt{2t  / (1-t)}
\end{align}
Then, the forward SDE will become
\begin{align}
    \mathrm{d}x_t = -\frac{1}{1-t}x_t \mathrm{d}t + \sqrt{\frac{2t}{1-t}}\mathrm{d}\overrightarrow{w}_t
\end{align}
In other word, a flow matching trained with linear interpolant and independent coupling, is equivalent to a diffusion model defined via the forward SDE in \Cref{eq:sde_diffusion_fwd}, where 
\begin{align}
    \beta_t=  -\frac{1}{1-t}, \quad \sigma_t = \sqrt{\frac{2t}{1-t}}
\end{align}

\subsection{Inference-time Scaling of Diffusion Models}\label{app:scaling}

Inference-time scaling aims to spend more compute in inference time to improve or steer the generation process of the diffusion models.
A common framework is, given a pretrained diffusion model and a verifier, one runs several trajectories, or branch the trajectory with multiple particles, and then select the best particle according to the verifier.
The framework that is the most relevant to our approach is those based on Sequential Monte Carlo \citep{wu2023practical}, also known as Feynman-Kac method \citep{singhal2025general} in a broader sense.
\par
These approaches aim to to tilt the diffusion process towards a reward function.
Specifically, for a pretrained diffusion model with denoising kernel $p(x_{t_{n-1}}|x_{t_n})$, assuming its final sample follow a distribution with density $p_0$, these approaches aim to draw sample from $p_0 (x) \exp (r(x))$ where $r$ is a reward function, or a likelihood in a inverse problem.
\par
First, these approaches define a proposal process, where the transition kernel is given by $q(x_{t_{n-1}}|x_{t_n})$.
This process can be chosen to be the same as the pretrained diffusion, or can be guided by some guidance, similar to DPS \citep{chung2022diffusion}.
Then, they define a sequence of intermediate reward functions $r_{t_n}$.
This can be selected to be the final reward $r$ evaluated on $\mathbb{E}[x_0|x_{t_n}]$,
obtained following the Tweedie's formula \citep{04f9ad2d-909a-34da-ad09-b5ac91b665b6}, i.e.,  \begin{align}
    r_{t_n}(x_t) = \gamma_{t_n}r(\mathbb{E}[x_0|x_{t_n}])
\end{align} 
where $\gamma_{t_n}$ is a scaling factor and typically $\gamma_1=0$, and $\gamma_0=1$.
Then, the SMC weight for each step  is defined as 
\begin{align}
    w \propto \frac{p(x_{t_{n-1}}|x_{t_n}) \exp(r_{t_{n-1}}(x_{t_{n-1}}))}{q(x_{t_{n-1}}|x_{t_n}) \exp(r_{t_{n}}(x_{t_{n}}))}
\end{align}
More broadly, similar SMC algorithm can be used for inference-time annealing, product, or CFG debiasing, for both Gaussian diffusion models and discrete diffusions \citep{skreta2025feynman,he2025rne,lee2025debiasing,ren2025driftlite}.
\par
From this perspective, our proposed scaling algorithm takes a trivial reward function $r=0$,  use the adapted diffusion model as the prior, and take the analytical kernel in \Cref{eq:analytical_p} as the target.
In principle, we could further guide the generation by introducing a reward and performing reward-guided selection. However, doing so would substantially increase the number of function evaluations (NFE). 
In contrast, our current algorithm introduces no additional functional calls: 
it only adds Gaussian-kernel sampling and evaluation, which is lightweight relatively.

\subsection{Reverse Channel Coding and Relative Entropy Coding}\label{app:rec_bg}

Reverse Channel Coding, also known as Relative Entropy Coding, aims to encode a random sample from a target distribution $q$, using a shared prior distribution $p$ and the shared PRNG.
For a more comprehensive introduction and explanation, we refer the reader to \citep{theis2022algorithms}.
\par
The most intuitive approach for performing relative entropy coding is minimal random coding \citep[MRC, ][]{DBLP:conf/iclr/HavasiPH19}, where the encoder draws $M$ samples from $p$ using the shared PRNG, and transmit the index of the selected sample to the decoder.
Here, to ensure the sample has small bias, $M$ is typically set to $\exp(D_{\text{KL}}[q||p])$, and encoding the index costs roughly $D_{\text{KL}}[q||p]$ nats.
Several following approaches improve the coding efficiency of MRC.
Particular, ordered random coding \citep[ORC, ][]{theis2022algorithms} added a permuted Gumbel noise to ensure that we not only choose the best sample, but also choose the one with the smallest index.
A* Coding \citep{flamich2022fast} and Greedy Poisson  rejection sampling \citep[GPRS, ][]{flamich2023greedy} utilize Poisson process to control the bitrate in a similar way as ORC.
Due to the Poisson process, ORC, A* Coding and GPRS could achieve a finite codelength even when for infinite number of candidates drawn from $p$. 
One can show that the theoretical codelength of these approaches is bounded by
\begin{align}
D{_\text{KL}}[q||p] + \log_2 (D{_\text{KL}}[q||p] + 1) + \text{const}.    
\end{align}
where the KL is in bits.
\par
However,  Relative Entropy Coding is challenging.
To ensure a small bias, these approaches all requires $\mathcal{O}(\exp(D_{\text{KL}}[q||p]))$ or even $\mathcal{O}(\exp(D_{\infty}[q||p]))$ number of candidate samples.
While some approaches aim to accelerate the algorithm \citep{flamich2022fast,flamich2023greedy,he2024accelerating}, they are still limited by strong assumptions on the distribution.
Therefore, these algorithms are still constraint in practice.

Fortunately, in our proposed inference-time scaling algorithm, the  Relative Entropy Coding component is only used to refine the performance of the basic compression via adaption.
Therefore, even when the KL divergence between the target and proposal is large, we do not need an exponentially large number of particles.
In other word, we use  the Relative Entropy Coding algorithm  only as a lightweight refinement component, significantly reducing its complexity.

\subsection{Lossy Compression with Diffusion via Relative Entropy Coding or Fixed  Codebooks}\label{app:rec_diffc}

As discussed in the main text, our scaling algorithm can also be interpreted through the lens of
Diff-C \citep{theis2022lossy}, a lossy compression framework based on diffusion models and
relative entropy coding.
A key difference lies in computation required for Diff-C and our scaling algorithm.
Diff-C uses an unadapted, generic diffusion model to define the proposal denoising kernel $p(x_{t_{n-1}} | x_{t_n})$. 
As a result, the number of candidates required for effective selection can become prohibitive---often scaling exponentially with the KL divergence between the target and the proposal.

Additionally, Diff-C encodes only the first $N' < N$ diffusion steps.
The remaining steps are directly generated by the pretrained diffusion.
To control the bitrate, Diff-C adjust $N'$.
In contrast, our approach controls the bitrate primarily through the entropy model applied to the adaptation parameters.
The inference-time scaling component contributes only a negligible overhead---approximately
$0.00005$--$0.0005$ bpp for our 480p video clips, and therefore we do not truncate the process as in
Diff-C; instead, we apply it over the entire generation trajectory.

Recently, \citet{vonderfecht2025lossy} introduced a CUDA-accelerated implementation of
Diff-C, making Diff-C practical for the first time.
Although our scaling algorithm is not as computationally demanding as Diff-C, it can also benefit from their implementation for further acceleration.

DDCM~\citep{ohayon2025compressed,vaisman2025turbo} proposed an alternative approach to leverage a pretrained diffusion model for compression using a fixed random codebook. 
This can be seen as a variation of Diff-C, where instead of using a randomly sampled index from the codebook, one takes the best candidate yielding the best inner product with the target.
They also proposed an iterative approach to encode more than one candidate for larger bitrates.

While both DiffC and DDCM yield efficient compression by directly leveraging the diffusion model knowledge.
Our approach is still substantially different in how the knowledge is used.
DiffC and DDCM encode the noise candidate index or the codebook index at each step, whereas our approach aims to first learn a \emph{continuous functional representation} by compressing the target into a parameter-space adaptation attached to the model.
This can lead to important downstream differences.
Our encoded representation, together with the base model, is a new diffusion model, which enables more flexible denoising procedures, including inference-time scaling or downstream editing. 
In contrast, Diff-C is closer to a modification of the inference algorithm rather than a parameterized adaptation of the model itself. 
The function that Diff-C or DDCM encodes is fixed and has less flexibility.
Additionally, the index encoded in DiffC or DDCM is highly discontinuous and random, making it difficult to amortize. 
In contrast, our approach is compatible with a promising future direction: training an encoder to output the adaptation directly, potentially enabling a more compact representation and faster coding time.

\section{Proposition and Proofs}

\subsection{Justification of the Overfitting Objective}\label{app:doobh}

In this section, we show that our training objective in \Cref{eq:fm_delta} from the perspective of MDL, assuming the diffusion process.
For easier reference, we will repeat key equations below.

The generation process of a pretrained diffusion model is written as
\begin{align}
    \mathrm{d}x_t
    = \Big(\beta_t x_t - \sigma_t^2 \nabla \log p_t^{\text{data}}(x_t)\Big)\,\mathrm{d}t
    + \sigma_t\,\mathrm{d}\overleftarrow{w}_t,
    \qquad x_1 \sim \mathcal{N}(0,I),
\end{align}
where $p_t^{\text{data}}$ denotes the density of the data distribution convolved with the Gaussian noise at time $t$ (i.e., the marginal distribution of $x_t$ under the forward noising process).
Under the flow-matching parameterization, one has $\beta_t=-1/(1-t)$ and $\sigma_t=\sqrt{2t/(1-t)}$, and the score function is directly related to the learned vector field.

This SDE induces a path space together with a probability measure, which we denote by $\mathbb{P}$.
Our goal is to construct a new measure $\mathbb{P}'$ that reconstruct   $x$ at $t=0$ while introducing as little additional information as possible relative to $\mathbb{P}$.
The information-minimizing solution is the posterior   of $\mathbb{P}$ given the terminal event $x_0=x$, which can be characterized via Doob's $h$-transform.

Concretely, conditioning the base dynamics on the event $x_0 \in \mathcal{A}$ yields the controlled reverse-time SDE
\begin{align}
    \mathrm{d}x_t
    = \Big(\beta_t x_t - \sigma_t^2 \nabla \log p_t^{\text{data}}(x_t) - \sigma_t^2 \nabla \log h_t(x_t)\Big)\,\mathrm{d}t
    + \sigma_t\,\mathrm{d}\overleftarrow{w}_t,
    \qquad x_1 \sim \mathcal{N}(0,I),
\end{align}
where
\begin{align}
    h_t(x_t)= \mathbb{P}\{x_0 \in \mathcal{A}\mid x_t\}
\end{align}
is the probability of hitting $\mathcal{A}$ at $t=0$ when starting from $x_t$ and following to the base dynamics.
In the special case $\mathcal{A}=\{x\}$, $h_t(x_t)$ reduces to the conditional terminal density $p(x_0=x | x_t)$.
\par
Let's look at the term $\sigma_t^2 \nabla \log p_t^{\text{data}}(x_t) + \sigma_t^2 \nabla \log h_t(x_t)$, this can be simplified as
\begin{align}
    &\sigma_t^2 \nabla \log p_t^{\text{data}}(x_t) + \sigma_t^2 \nabla \log h_t(x_t)\\
    =&\sigma_t^2 \nabla( \log  p(x_0=x | x_t) + \log p^\text{data}_t(x_t))\\
    =&\sigma_t^2 \nabla( \log  p( x_t|x_0=x) + p^\text{data}_0(x_0=x))\\
    =&\sigma_t^2 \nabla \log  p( x_t|x_0=x)
\end{align}
The following equality holds as $\nabla$ is taken only over $x_t$, and the last line holds when the pretrained diffusion perfectly reverses the noising forward process, which is a common assumption we can take.
\par
Finally, notice that $p( x_t|x_0=x)$ is a simple Gaussian kernel, defined in \Cref{eq;gaussian_kernel}.
Under the flow matching setup, this kernel is given by
\begin{align}
  p( x_t|x_0=x) = \mathcal{N}(x_t | (1-t)x_0, t^2)  
\end{align}
Hence, the optimal vector field satisfies
\begin{align}
  - \frac{(x_t- (1-t)x)}{t^2} = -\frac{x_t + (1-t)v^*}{t}
\end{align}
Considering $x_t = (1-t)x + t\epsilon$, we have
\begin{align}
    v^* = \epsilon - x
\end{align}
This also coincides with the optimal vector field between a delta distribution at $x$ and Gaussian.

\subsection{Bounds on the Distortion of Early Stopping}\label{app:bound}

\begin{proof}
The proof has three parts. We first analyze the error introduced by the one-step map, and then we will analyze the error introduced by the integrating the vector field with error.
Finally, we will put the error together, obtaining an upper bound on the overall error.
\paragraph{(1) Error introduced by one-step map.}
We assume $x_1 \sim \mathcal{N}(0, 1)$ and $x_0=x$ is the data to be compressed.
We train a vector field, and the optimal solution is given by \Cref{eq:one_step}:
\begin{align}
  v^*(x_t,t) = {(x_t-x)}/{t}
\end{align}
Therefore, the one-step prediction of $x$ (which we denoted as $\tilde{x}$) from $\tau$ is given by
\begin{align}
    \tilde{x} = \tilde{x}_\tau - \tau    v_\theta(\tilde{x}_\tau,\tau)
\end{align}
Note that we also take the error in $\tilde{x}_\tau$ into account.
We can hence calculate the reconstruction error as 
\begin{align}
    \delta &= \tilde{x} - x\\
    &=\tilde{x}_\tau - \tau    v_\theta(\tilde{x}_\tau,\tau) - x_\tau + \tau    v^*(x_\tau,\tau)\\
    &=\delta_\tau - \tau(v_\theta(\tilde{x}_\tau,\tau) - v^*(x_\tau,\tau))\\
    &=\delta_\tau - \tau(v_\theta(\tilde{x}_\tau,\tau) - (x_1 - x))
\end{align}
where $\delta_\tau$ denotes the error at step $\tau$.
\paragraph{(2) Error introduced by error in the vector field.}
Now, let's look at how $\tilde{x}_\tau$ is generated.
We start from $x_1$ and follow the ODE backwards in time:
\begin{align}
    \frac{\mathrm{d}\tilde{x}_t}{\mathrm{d} t} = v_\theta(\tilde{x}_t, t)
\end{align}
and recall the ground truth path is also known:
\begin{align}
    \frac{\mathrm{d} {x}_t}{\mathrm{d} t} = v^*( {x}_t, t) = x_1 - x
\end{align}
We can hence calculate the error
\begin{align}
      \frac{\mathrm{d} \delta_t}{\mathrm{d} t} = v_\theta(\tilde{x}_t, t) -(x_1 - x) ,\quad \delta_1 = 0
\end{align}
and therefore, 
\begin{align}
    \delta_t &= \int_1^\tau (v_\theta(\tilde{x}_t, t) -(x_1 - x) ) \mathrm{d} t\\
    &=-\int_\tau ^1(v_\theta(\tilde{x}_t, t) -(x_1 - x) ) \mathrm{d} t
\end{align}
\paragraph{(3) Putting them together. }
Putting the errors together, we obtain
\begin{align}
    \|\delta \|= \|\int_\tau ^1(v_\theta(\tilde{x}_t, t) -(x_1 - x) ) \mathrm{d} t + \tau (v_\theta(\tilde{x}_\tau, \tau) -(x_1 - x) )\|
\end{align}
Now, introduce the error of the learned vector field: 
\begin{align}
    e_t = v_\theta(x_t, t) -(x_1 - x)
\end{align}
Note that the error is introduced on the clean path $x_t$ instead of $\tilde{x}_t$.
Therefore, we have
\begin{align}
   \| v_\theta(\tilde{x}_t, t) -(x_1 - x)\|  &=  \| v_\theta(x_t, t) -(x_1 - x) + v_\theta(\tilde{x}_t, t) - v_\theta(x_t, t)\|\\
   &\leq \| v_\theta(x_t, t) -(x_1 - x) \| +  \| v_\theta(\tilde{x}_t, t) - v_\theta(x_t, t)\|
\end{align}
As we further assume the learned network is $L$-Lipschitz continuous, hence
\begin{align}
    \| v_\theta(\tilde{x}_t, t) -(x_1 - x)\|  &\leq \| v_\theta(x_t, t) -(x_1 - x) \| +  L\|  \tilde{x}_t - x_t\|\\
    &=\|e_t \| +  L\|  \delta_t\|
\end{align}
Therefore, we have
\begin{align}
  \|\delta_\tau\|  \leq \int_\tau^1  \| v_\theta(\tilde{x}_t, t) -(x_1 - x)\| \mathrm{d}t \leq \int_\tau^1 (\|e_t \| +  L\|  \delta_t\|) \mathrm{d}t
\end{align}
With Grönwall's inequality \citep{c680dbbe-119c-31e0-a97a-d790b679674f}, we have
\begin{align}
     \|\delta_\tau\| \leq \exp(L(1-\tau)) \int_\tau^1 \| e_t \| \mathrm{d}t
\end{align}
Therefore, we have
\begin{align}
    \|\delta\|\leq  (\tau L + 1) \|\delta_\tau\| + \tau \| e_\tau\|\leq  (\tau L + 1)\exp(L(1-\tau)) \int_\tau^1 \| e_t \| \mathrm{d}t  + \tau \| e_\tau\|
\end{align}
\end{proof}

\section{Experimental Details}\label{app:exp_details}

For image experiments, we use \texttt{Qwen-Image} \citep{wu2025qwen} as our base model; and for video experiments, we use \texttt{Wan2.1-T2V-1.3B-Diffusers} \citep{wan2025wan} as our base model.
We now describe additional experimental settings below.

\subsection{Representation Experiments}
\label{app:represent_exp}

The detailed settings for experiments of image and video representations are described in Table~\ref{tab:represent_image} and \ref{tab:represent_video}.

\begin{table}[H]
\centering
\small
\setlength{\tabcolsep}{6pt}

\begin{minipage}[h]{0.48\linewidth}
\centering
\begin{tabular}{l l}
\toprule \multicolumn{2}{l}{\textbf{Training}}\\ 
\midrule 
Image & Kodim03 from \citep{Kodakdataset} \\ 
Resolution & 768x512 \\
Generative model & Qwen-Image-20B \citep{wu2025qwen} \\ 
Transformer number & 60 \\ 
LoRA rank $r$ & 1/2/4 \\ 
LoRA position & QKVO \\ 
One-vector size $k$ & $2^{15}$ / $2^{16}$ / $2^{17}$ / $2^{18}$ \\ Training iterations & 1000 \\ 
Optimizer & AdamW, weight decay 1e-4\\ 
Learning rate & 0.0015 \\ 
Batch size & 64 \\
\midrule 
\multicolumn{2}{l}{\textbf{Inference}} \\ 
\midrule 
Denoising steps & 50 \\ 
Classifier-free guidance & Disabled \\ 
\bottomrule
\end{tabular}
\captionof{table}{Settings for LoRA- or one-vector image representations.}
\label{tab:represent_image}
\end{minipage}
\hfill
\begin{minipage}[h]{0.48\linewidth}
\centering
\begin{tabular}{l l}
\toprule \multicolumn{2}{l}{\textbf{Training}}\\ 
\midrule 
Video & Beauty from \citep{mercat2020uvg} \\ 
Resolution & Cropped and resized to 832x480x81 \\
Generative model & Wan-2.1-1.3B \citep{wan2025wan} \\ 
Transformer number & 30 \\ 
LoRA rank $r$ & 1/2/4/8 \\ 
LoRA position & QKVO \\ 
One-vector size $k$ & $2^{15}$ / $2^{16}$ / $2^{17}$ / $2^{18}$ \\ Training iterations & 700 \\ 
Optimizer & AdamW, weight decay 1e-4\\ 
Learning rate & 0.002 \\ 
Batch size & 32 \\
\midrule 
\multicolumn{2}{l}{\textbf{Inference}} \\ 
\midrule 
Denoising steps & 50 \\ 
Classifier-free guidance & Disabled \\ 
\bottomrule
\end{tabular}
\captionof{table}{Settings for one-vector video representations.}
\label{tab:represent_video}
\end{minipage}

\end{table}

\subsection{Compression Experiments}
\label{app:compression_setting}

To compress the one-vector representation into bitstream, we adopt a two-stage training strategy. At the first stage, we train the one-vector representations without entropy constraint. Note that hashing mapping is normalized similar to \citep{li2025unilora}.  Then, we load the pretrained vector and initialize the parameters entropy model to optimize them together. During that time, the training objective is aggregation of the diffusion loss and the rate loss, where a Lagrange multiplier $\lambda$ controls the trade-off.  
We describe other hyperparameters for video compression task in Table~\ref{tab:compression-hparams}.  

\begin{table}[H]
\centering
\small
\setlength{\tabcolsep}{6pt}
\begin{tabular}{l l}
\toprule
\multicolumn{2}{l}{\textbf{Training}}\\
\midrule
Resolution & Cropped and resized to 832x480x81 \\
Generative model & Wan-2.1-1.3B \citep{wan2025wan} \\ 
Transformer number & 30 \\ 
LoRA rank $r$ & 1 \\ 
LoRA position & QKVO \\
One-vector size $k$ & $2^{17}=131072$ \\ 
Rate loss weight $\lambda$ &  [1.5e-3, 3e-3, 6e-3]\\
Training iterations & 1000 for stage-1, 1000 for stage-2 \\
Optimizer & AdamW, weight decay 1e-4 \\
Learning rate & 0.0015 \\
Batch size & 64 \\
\midrule
\multicolumn{2}{l}{\textbf{Inference}}\\
\midrule
Denoising steps & 100 \\
\midrule
\multicolumn{2}{l}{\textbf{Inference-time Scaling}}\\
\midrule
Denoising steps  & 100/1000 \\
Sample per step & $2^{18}/2^{10}$\\
\bottomrule
\end{tabular}
\caption{Hyperparameters for compression and scaling experiments.}
\label{tab:compression-hparams}
\end{table}

\subsection{Editing Experiments}
\label{app:editing_setting}

Figure~\ref{fig:image_editing} in the main paper presents a collection of image editing experiments enabled by the proposed implicit visual representations.
In the second row of the figure, the left-most image is generated by overfitting a LoRA-based representation to a single image (Kodim23 from the Kodak dataset~\citep{Kodakdataset}), followed by sampling with a modified textual prompt.
The second image is produced using the same procedure but with a different source image (Kodim07).
The third image demonstrates \emph{visual merging}: a single set of LoRA-based representations is jointly optimized over two source images (Kodim07 and Kodim01). During training, the source image within each mini-batch is randomly sampled from the image set, encouraging the learned representation to encode shared visual characteristics.
The fourth image is obtained in the same manner but extends the collective representation to three source images.
Finally, the right-most image illustrates resolution editing. After learning an implicit visual representation at a resolution of $768\times512$, the generation resolution is changed to $512\times768$ at inference time. Since the underlying diffusion foundation model supports variable-resolution generation, this experiment shows that the proposed implicit visual representations naturally inherit this capability, enabling flexible resolution adaptation without retraining. We emphasize that the editing experiment here is zero-shot, and the used base model originally does not accept native visual input.

\subsection{Configurations of Traditional Codecs}

We used the reference configuration files recommended by standardization organizations.
Specifically, for VTM, the cfg file is from
\url{https://vcgit.hhi.fraunhofer.de/jvet/VVCSoftware_VTM/-/blob/master/cfg/encoder_randomaccess_vtm.cfg?ref_type=heads};
and for HM, we use the cfg file from
\url{https://vcgit.hhi.fraunhofer.de/jvet/HM/-/blob/master/cfg/encoder_randomaccess_main10.cfg?ref_type=heads}.
The used \textit{Random Access} mode helps to produce the strongest performance from H.266/VTM or H.265/HM.

\newpage
\section{Additional Results}\label{app:extra_result}

\subsection{Encoding/Decoding Time}
\label{app:coding_time}

We report the encoding and decoding time of our method VOV (with \& without scaling) in Table \ref{tab:coding_time}. We also compare with both traditional video codec and real-time neural video codec including DCVC-RT.
The overfitting phase of VOV is measured on 8 NVIDIA A100 GPU in parallel, while others are measured on one NVIDIA A100 GPU.
In our work, we do not explicitly optimize runtime. 
In addition, functional representations based on per-instance overfitting are inherently more time-consuming than conventional learned codecs. 
The runtime also scales with spatial resolution and video length, since the Diffusion Transformer requires more denoising steps as the number of visual tokens increases. 
As a promising future direction, we could develop an amortized encoder that directly predicts the LoRA adaptation, which could substantially reduce the encoding cost. 
\begin{table}[h]
\centering
\caption{Encoding and Decoding time.}
\label{tab:coding_time}
\begin{tabular}{lcc}
\toprule
\textbf{Method} & \textbf{Encoding per frame} & \textbf{Decoding per frame} \\
\midrule
VTM/H.266 RA mode & 30.51 s & 19.0 ms \\
DCVC-RT & 5.84 ms & 5.90 ms \\
Our VOV & $\sim$18 min (overfitting) & 2.48 s \\
+ 1000-step scaling & +27 s (importance sampling) & 25 s \\
\bottomrule
\end{tabular}
\end{table}

\subsection{Full RD Curve}
\begin{figure}[H]
    \centering
    \includegraphics[width=\linewidth]{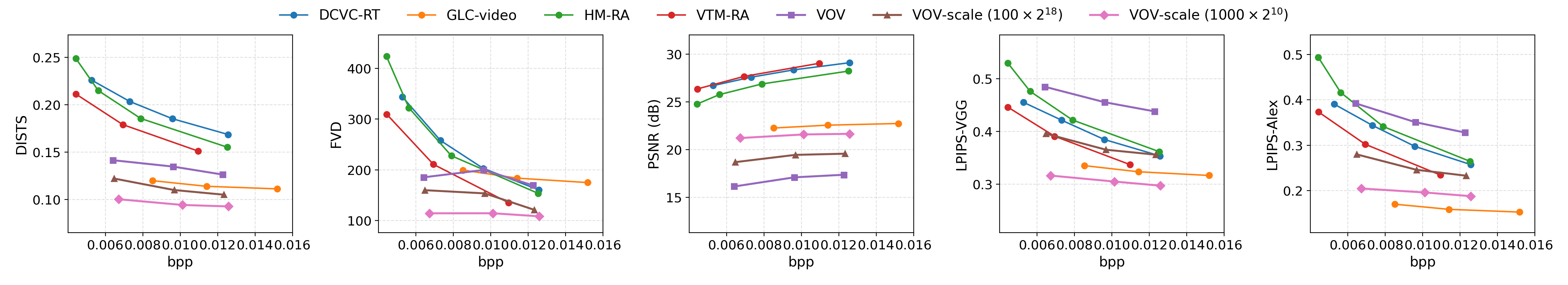}
    \caption{Comparisons with existing video codecs on HEVC-B. For DISTS, FVD and LPIPS, lower is better. For PSNR, higher is better.}
    \label{fig:hevc_b}
\end{figure}

\begin{figure}[H]
    \centering
    \includegraphics[width=\linewidth]{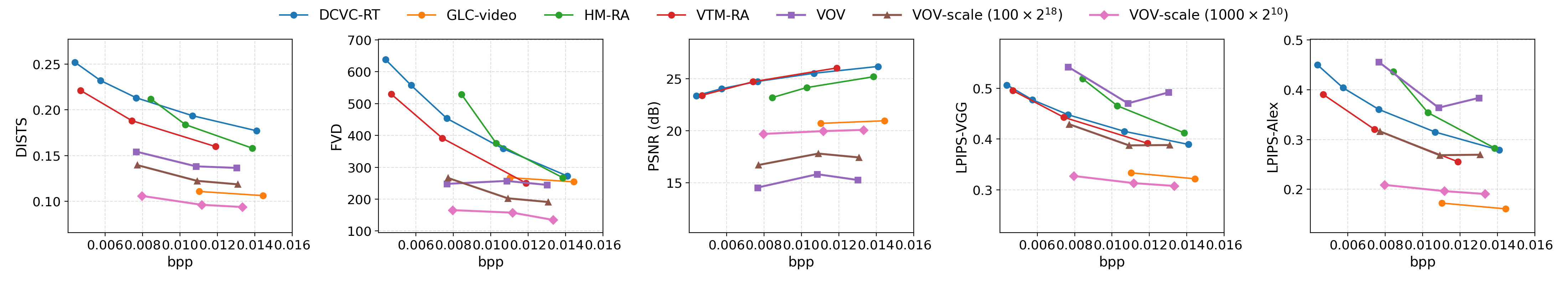}
    \caption{Comparisons with existing video codecs on HEVC-C. For DISTS, FVD and LPIPS, lower is better. For PSNR, higher is better.}
    \label{fig:hevc_c}
\end{figure}

\begin{figure}[H]
    \centering
    \includegraphics[width=\linewidth]{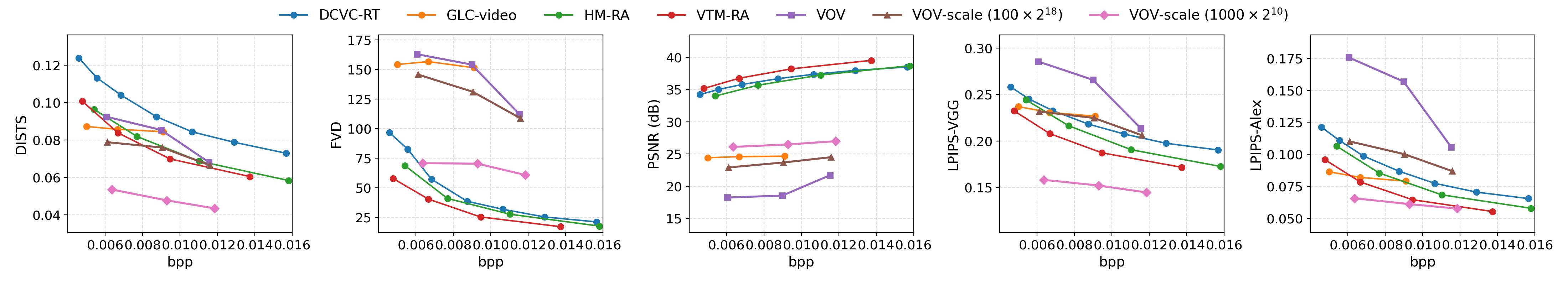}
    \caption{Comparisons with existing video codecs on HEVC-E. For DISTS, FVD and LPIPS, lower is better. For PSNR, higher is better.}
    \label{fig:hevc_e}
\end{figure}

\begin{figure}[H]
    \centering
    \includegraphics[width=\linewidth]{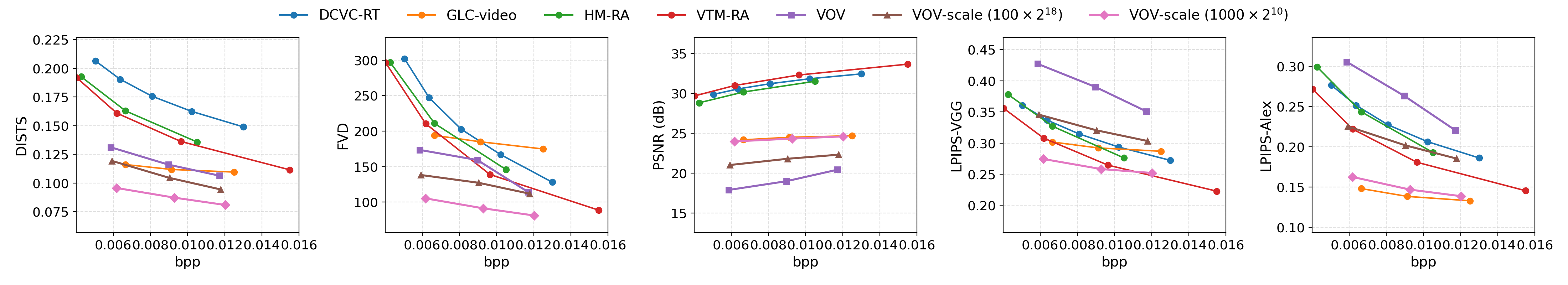}
    \caption{Comparisons with existing video codecs on UVG. For DISTS, FVD and LPIPS, lower is better. For PSNR, higher is better.}
    \label{fig:uvg}
\end{figure}

\subsection{Visualization of Compressed Videos}

\label{appendix:more_visual}

As shown in Figure~\ref{fig:add_visual_compare}, we provide more visualizations of decoded videos for qualitative comparison. 
The scaling results is with 1000 denoising steps, and $2^{10}$ samples per step.
In addition, we save the decoded videos into mp4 files to supplementary material for better demonstration of the advantage of our method. 
It is shown that our method achieves much better perceptual quality, such as smooth temporal coherence, which cannot be reflected by rate-distortion curves.

\begin{figure*}[h]
 \centering
\begin{subfigure}{0.32\linewidth}
 \centering
 \includegraphics[width=\linewidth]{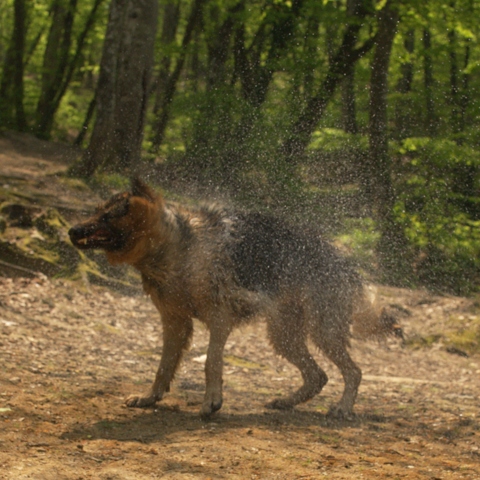}
 \caption*{\scriptsize Ground Truth \\ \ }
\end{subfigure}
\hfill
\begin{subfigure}{0.32\linewidth}
 \centering
 \includegraphics[width=\linewidth]{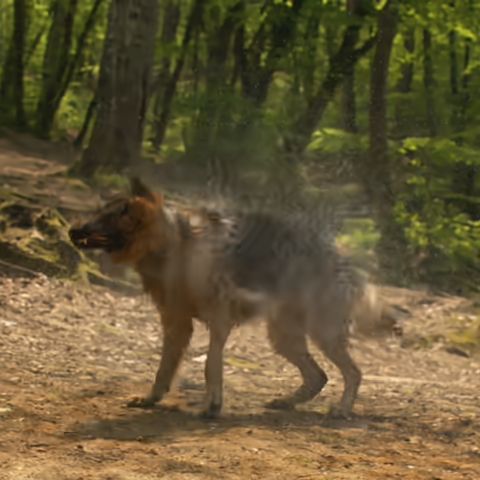}
 \caption*{\scriptsize VTM \\ 0.0211 bpp}
\end{subfigure}
\hfill
\begin{subfigure}{0.32\linewidth}
 \centering
 \includegraphics[width=\linewidth]{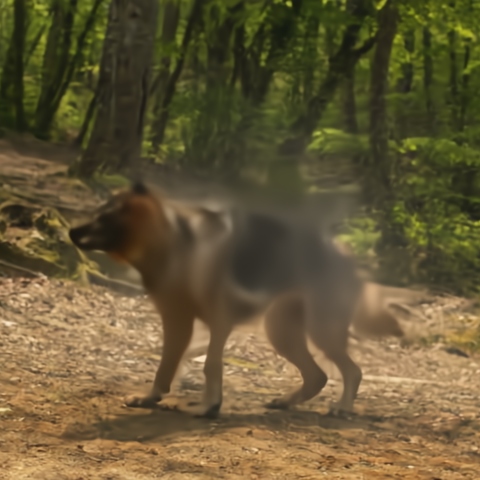}
 \caption*{\scriptsize DCVC-RT \\ 0.0149 bpp}
\end{subfigure}
\hfill
\begin{subfigure}{0.32\linewidth}
 \centering
 \includegraphics[width=\linewidth]{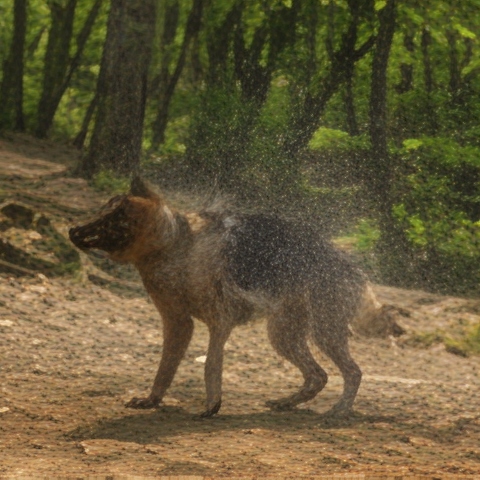}
 \caption*{\scriptsize GLC-Video \\ 0.0152 bpp}
\end{subfigure}
\hfill
\begin{subfigure}{0.32\linewidth}
 \centering
 \includegraphics[width=\linewidth]{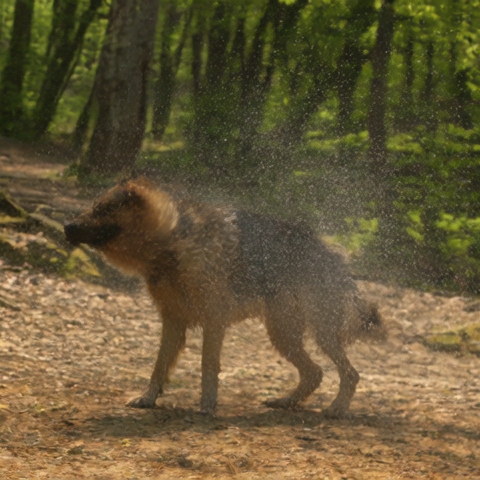}
 \caption*{\scriptsize VOV \\ 0.0126 bpp}
\end{subfigure}
\hfill
\begin{subfigure}{0.32\linewidth}
 \centering
 \includegraphics[width=\linewidth]{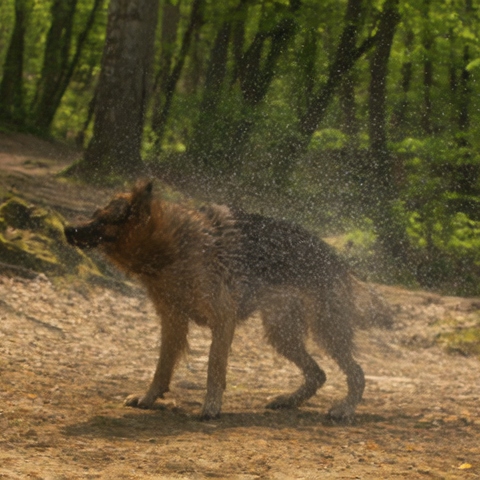}
 \caption*{\scriptsize VOV + Scaling \\ 0.0129 bpp}
\end{subfigure}
\end{figure*}

\begin{figure*}[h]
 \centering
\begin{subfigure}{0.32\linewidth}
 \centering
 \includegraphics[width=\linewidth]{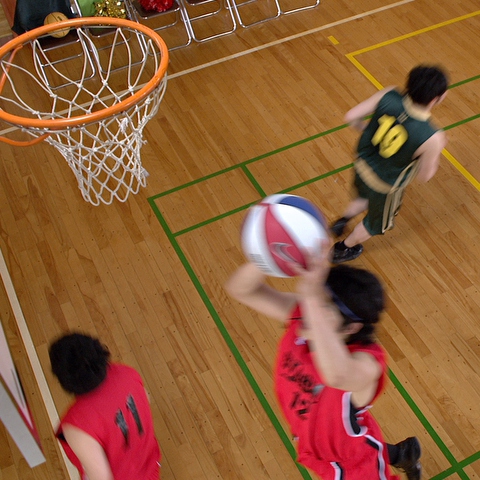}
 \caption*{\scriptsize Ground Truth \\ \ }
\end{subfigure}
\hfill
\begin{subfigure}{0.32\linewidth}
 \centering
 \includegraphics[width=\linewidth]{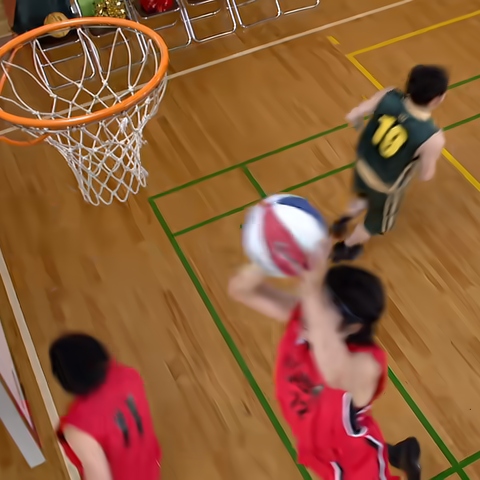}
 \caption*{\scriptsize VTM \\ 0.0179 bpp}
\end{subfigure}
\hfill
\begin{subfigure}{0.32\linewidth}
 \centering
 \includegraphics[width=\linewidth]{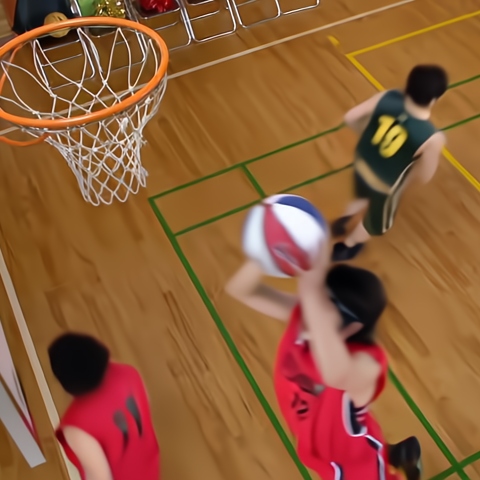}
 \caption*{\scriptsize DCVC-RT \\ 0.0157 bpp}
\end{subfigure}
\hfill
\begin{subfigure}{0.32\linewidth}
 \centering
 \includegraphics[width=\linewidth]{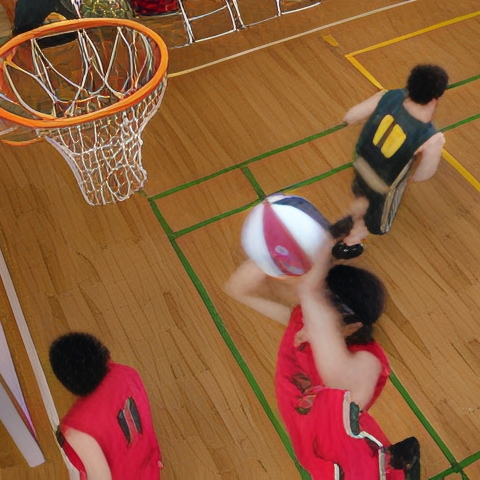}
 \caption*{\scriptsize GLC-Video \\ 0.0131 bpp}
\end{subfigure}
\hfill
\begin{subfigure}{0.32\linewidth}
 \centering
 \includegraphics[width=\linewidth]{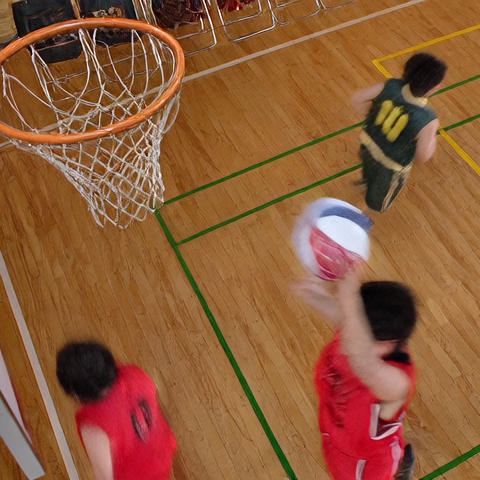}
 \caption*{\scriptsize VOV \\ 0.0127 bpp}
\end{subfigure}
\hfill
\begin{subfigure}{0.32\linewidth}
 \centering
 \includegraphics[width=\linewidth]{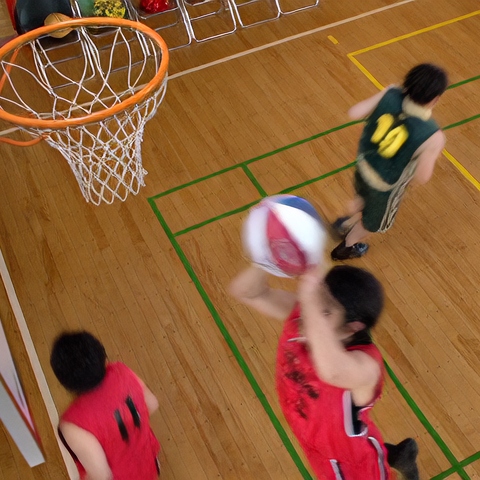}
 \caption*{\scriptsize VOV + Scaling \\ 0.0130 bpp}
\end{subfigure}
\end{figure*}

\begin{figure*}[h]
 \centering
\begin{subfigure}{0.32\linewidth}
 \centering
 \includegraphics[width=\linewidth]{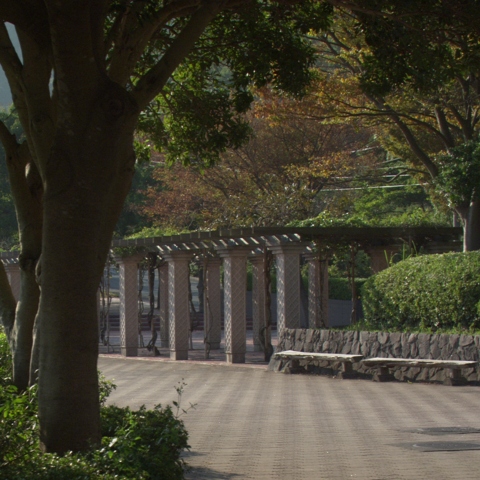}
 \caption*{\scriptsize Ground Truth \\ \ }
\end{subfigure}
\hfill
\begin{subfigure}{0.32\linewidth}
 \centering
 \includegraphics[width=\linewidth]{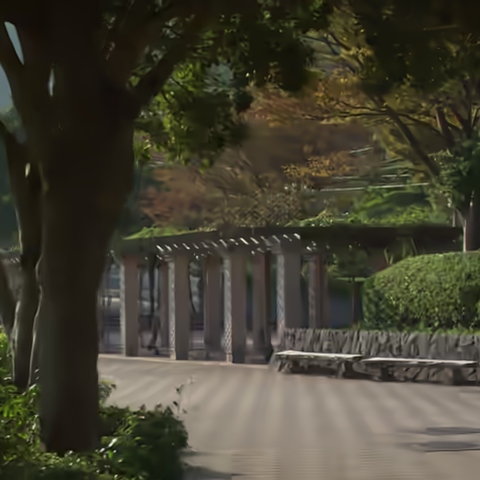}
 \caption*{\scriptsize VTM \\ 0.0093 bpp}
\end{subfigure}
\hfill
\begin{subfigure}{0.32\linewidth}
 \centering
 \includegraphics[width=\linewidth]{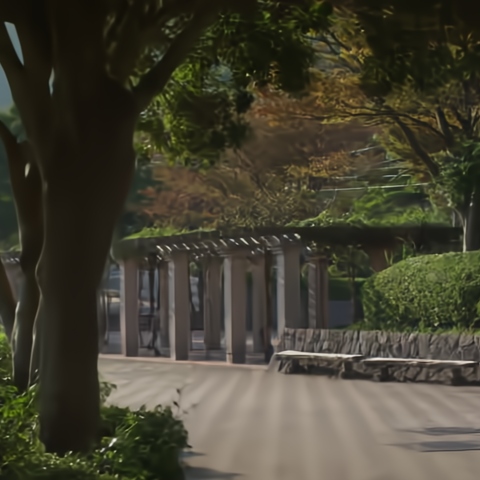}
 \caption*{\scriptsize DCVC-RT \\ 0.0127 bpp}
\end{subfigure}
\hfill
\begin{subfigure}{0.32\linewidth}
 \centering
 \includegraphics[width=\linewidth]{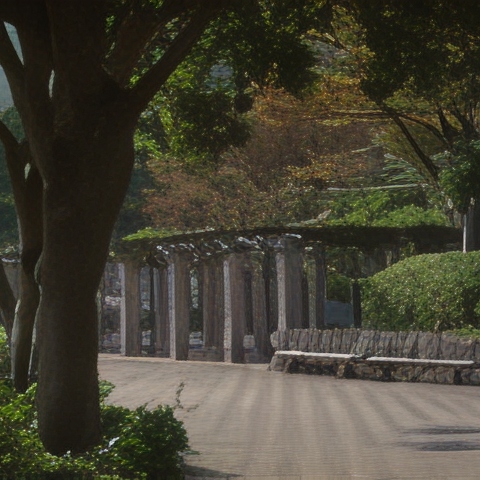}
 \caption*{\scriptsize GLC-Video \\ 0.0116 bpp}
\end{subfigure}
\hfill
\begin{subfigure}{0.32\linewidth}
 \centering
 \includegraphics[width=\linewidth]{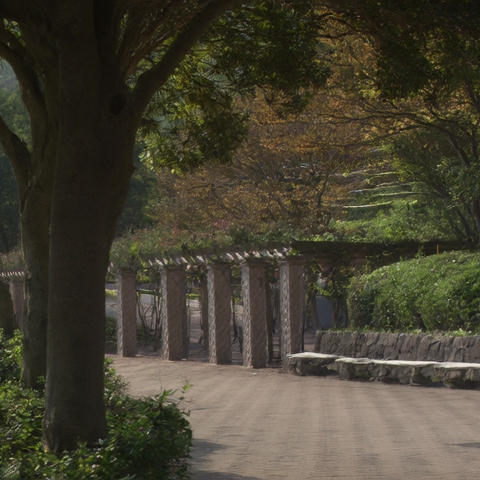}
 \caption*{\scriptsize VOV \\ 0.0083 bpp}
\end{subfigure}
\hfill
\begin{subfigure}{0.32\linewidth}
 \centering
 \includegraphics[width=\linewidth]{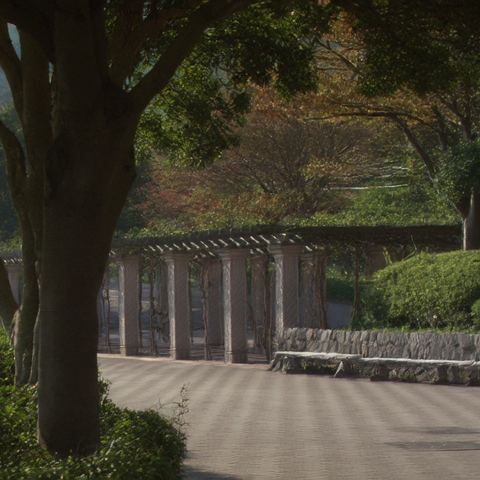}
 \caption*{\scriptsize VOV + Scaling \\ 0.0086 bpp}
\end{subfigure}
\caption{Additional visual comparisons with different video codecs.}
\label{fig:add_visual_compare}
\end{figure*}

\end{document}